\newcolumntype{P}[1]{>{\centering\arraybackslash}p{#1}} % for centered and p column
\definecolor{Gray}{gray}{0.85}
\newcommand{\cmark}{\textcolor{OliveGreen}{\ding{51}}}%
\newcommand{\xmark}{\textcolor{red}{\ding{55}}}%
\newtheorem{lemma}{Lemma}
\newcommand{\ours}{RED++~}
\newcommand{\oursvir}{RED++}
\newcommand{\REDref}{\cite{red2021}\,}
\newcommand{\ourtitle}{Data-Free Pruning of Deep Neural Networks via Input Splitting and Output Merging}%Redundant Computations Suppression via Input Splitting and Output Merging in Deep Neural Networks Data-Free Pruning}
\begin{document}

% \linenumbers % adds line numbers
%
% paper title
% Titles are generally capitalized except for words such as a, an, and, as,
% at, but, by, for, in, nor, of, on, or, the, to and up, which are usually
% not capitalized unless they are the first or last word of the title.
% Linebreaks \\ can be used within to get better formatting as desired.
% Do not put math or special symbols in the title.
% \title{\ours : Optimal Redundant Operations Suppression in Deep Neural Networks Data-Free Pruning}
% \title{\ours : Data-Free Pruning of Neural Networks via Output-wise Merging and Input-wise Splitting}
\title{\ours : \ourtitle}
%
%
% author names and IEEE memberships
% note positions of commas and nonbreaking spaces ( ~ ) LaTeX will not break
% a structure at a ~ so this keeps an author's name from being broken across
% two lines.
% use \thanks{} to gain access to the first footnote area
% a separate \thanks must be used for each paragraph as LaTeX2e's \thanks
% was not built to handle multiple paragraphs
%
%
%\IEEEcompsocitemizethanks is a special \thanks that produces the bulleted
% lists the Computer Society journals use for "first footnote" author
% affiliations. Use \IEEEcompsocthanksitem which works much like \item
% for each affiliation group. When not in compsoc mode,
% \IEEEcompsocitemizethanks becomes like \thanks and
% \IEEEcompsocthanksitem becomes a line break with idention. This
% facilitates dual compilation, although admittedly the differences in the
% desired content of \author between the different types of papers makes a
% one-size-fits-all approach a daunting prospect. For instance, compsoc 
% journal papers have the author affiliations above the "Manuscript
% received ..."  text while in non-compsoc journals this is reversed. Sigh.

\author{Edouard~Yvinec,
        Arnaud~Dapogny~,
        Matthieu~Cord~
        and~Kevin~Bailly% <-this % stops a space
\IEEEcompsocitemizethanks{\IEEEcompsocthanksitem E. Yvinec is a PhD student on Deep Learning Compression methods at Datakalab, 114 boulevard Malesherbes 75017 Paris and ISIR at Sorbonne Université, 4 Place Jussieu 65, 75005 Paris.\\
% note need leading \protect in front of \\ to get a newline within \thanks as
% \\ is fragile and will error, could use \hfil\break instead.
% E-mail: see http://www.michaelshell.org/contact.html
\IEEEcompsocthanksitem A. Dapogny is a ML researcher at Datakalab.
\IEEEcompsocthanksitem K. Bailly is the Head of research at Datakalab and associate professor at Sorbonne Université.}}% <-this % stops an unwanted space
% \thanks{Manuscript received April 19, 2005; revised August 26, 2015.}}

% note the % following the last \IEEEmembership and also \thanks - 
% these prevent an unwanted space from occurring between the last author name
% and the end of the author line. \textit{i.e.}, if you had this:
% 
% \author{....lastname \thanks{...} \thanks{...} }
%                     ^------------^------------^----Do not want these spaces!
%
% a space would be appended to the last name and could cause every name on that
% line to be shifted left slightly. This is one of those "LaTeX things". For
% instance, "\textbf{A} \textbf{B}" will typeset as "A B" not "AB". To get
% "AB" then you have to do: "\textbf{A}\textbf{B}"
% \thanks is no different in this regard, so shield the last } of each \thanks
% that ends a line with a % and do not let a space in before the next \thanks.
% Spaces after \IEEEmembership other than the last one are OK (and needed) as
% you are supposed to have spaces between the names. For what it is worth,
% this is a minor point as most people would not even notice if the said evil
% space somehow managed to creep in.

% The paper headers
\markboth{IEEE TRANSACTIONS ON PATTERN ANALYSIS AND MACHINE INTELLIGENCE}%
{Yvinec \MakeLowercase{\textit{et al.}}: \ourtitle}
% The only time the second header will appear is for the odd numbered pages
% after the title page when using the twoside option.
% 
% *** Note that you probably will NOT want to include the author's ***
% *** name in the headers of peer review papers.                   ***
% You can use \ifCLASSOPTIONpeerreview for conditional compilation here if
% you desire.

% The publisher's ID mark at the bottom of the page is less important with
% Computer Society journal papers as those publications place the marks
% outside of the main text columns and, therefore, unlike regular IEEE
% journals, the available text space is not reduced by their presence.
% If you want to put a publisher's ID mark on the page you can do it like
% this:
%\IEEEpubid{0000--0000/00\$00.00~\copyright~2015 IEEE}
% or like this to get the Computer Society new two part style.
%\IEEEpubid{\makebox[\columnwidth]{\hfill 0000--0000/00/\$00.00~\copyright~2015 IEEE}%
%\hspace{\columnsep}\makebox[\columnwidth]{Published by the IEEE Computer Society\hfill}}
% Remember, if you use this you must call \IEEEpubidadjcol in the second
% column for its text to clear the IEEEpubid mark (Computer Society jorunal
% papers don't need this extra clearance.)

% use for special paper notices
%\IEEEspecialpapernotice{(Invited Paper)}

% for Computer Society papers, we must declare the abstract and index terms
% PRIOR to the title within the \IEEEtitleabstractindextext IEEEtran
% command as these need to go into the title area created by \maketitle.
% As a general rule, do not put math, special symbols or citations
% in the abstract or keywords.
\IEEEtitleabstractindextext{%
\begin{abstract}
Pruning Deep Neural Networks (DNNs) is a prominent field of study in the goal of inference runtime acceleration. 
In this paper, we introduce a novel data-free pruning protocol \oursvir.
Only requiring a trained neural network, and not specific to DNN architecture, we exploit an adaptive data-free scalar hashing which exhibits redundancies among neuron weight values.
We study the theoretical and empirical guarantees on the preservation of the accuracy from the hashing as well as the expected pruning ratio resulting from the exploitation of said redundancies.
We propose a novel data-free pruning technique of DNN layers which removes the input-wise redundant operations. 
This algorithm is straightforward, parallelizable and offers novel perspective on DNN pruning by shifting the burden of large computation to efficient memory access and allocation. 
We provide theoretical guarantees on \ours performance and empirically demonstrate its superiority over other data-free pruning methods and its competitiveness with data-driven ones on ResNets, MobileNets and EfficientNets.
\end{abstract}

% Note that keywords are not normally used for peerreview papers.
\begin{IEEEkeywords}
Deep Learning, Pruning, Data-Free, Machine Learning,  Birthday Problem, Neural Networks.
\end{IEEEkeywords}}

% make the title area
\maketitle

% To allow for easy dual compilation without having to reenter the
% abstract/keywords data, the \IEEEtitleabstractindextext text will
% not be used in maketitle, but will appear (\textit{i.e.}, to be "transported")
% here as \IEEEdisplaynontitleabstractindextext when the compsoc 
% or transmag modes are not selected <OR> if conference mode is selected 
% - because all conference papers position the abstract like regular
% papers do.
\IEEEdisplaynontitleabstractindextext
% \IEEEdisplaynontitleabstractindextext has no effect when using
% compsoc or transmag under a non-conference mode.

% For peer review papers, you can put extra information on the cover
% page as needed:
% \ifCLASSOPTIONpeerreview
% \begin{center} \bfseries EDICS Category: 3-BBND \end{center}
% \fi
%
% For peerreview papers, this IEEEtran command inserts a page break and
% creates the second title. It will be ignored for other modes.
\IEEEpeerreviewmaketitle

\IEEEraisesectionheading{\section{Introduction}\label{sec:introduction}}
% Computer Society journal (but not conference!) papers do something unusual
% with the very first section heading (almost always called "Introduction").
% They place it ABOVE the main text! IEEEtran.cls does not automatically do
% this for you, but you can achieve this effect with the provided
% \IEEEraisesectionheading{} command. Note the need to keep any \label that
% is to refer to the section immediately after \section in the above as
% \IEEEraisesectionheading puts \section within a raised box.

% The very first letter is a 2 line initial drop letter followed
% by the rest of the first word in caps (small caps for compsoc).
% 
% form to use if the first word consists of a single letter:
% \IEEEPARstart{A}{demo} file is ....
% 
% form to use if you need the single drop letter followed by
% normal text (unknown if ever used by the IEEE):
% \IEEEPARstart{A}{}demo file is ....
% 
% Some journals put the first two words in caps:
% \IEEEPARstart{T}{his demo} file is ....
% 
% Here we have the typical use of a "T" for an initial drop letter
% and "HIS" in caps to complete the first word.

%%%%%%%%%%%%%%%%%%%%%%%%%%%%%%%%%%%%%%%%%%%%%%%%%%%%%%%%%%%%%%%%%%%%%%%%%%%%%%%%%%%%%
%
%
%                                    Introduction
%
%
%%%%%%%%%%%%%%%%%%%%%%%%%%%%%%%%%%%%%%%%%%%%%%%%%%%%%%%%%%%%%%%%%%%%%%%%%%%%%%%%%%%%%
\IEEEPARstart{D}{eep} Neural Networks (DNNs) are the corner stone of most machine learning solutions and applications introduced in the recent years. 
Specifically in computer vision, Convolutional Neural Networks (CNNs) achieve outstanding performance on various tasks such as object classification \cite{he2016deep}, detection \cite{he2017mask} or segmentation \cite{chen2017deeplab}. 
% Although some fields of application, such as Natural Language Processing (NLP) usually leverage the effectiveness of different architectures such as Recurrent Networks \cite{rumelhart1986learning, cho2014learning} and Transformers \cite{vaswani2017attention}.\newline
However, the high cost in terms of computational power appears to be limiting deployment, especially on edge devices. 
The pruning problem is defined in \cite{frankle2018lottery} as the search of a sub-architecture within the original network for reducing the computational requirements at inference. 
To this end, many pruning algorithms have been developed as discussed in \cite{cheng2017survey}. 
The usual taxonomy of pruning methods is based on the form of sparsity that is enforced upon the network: if whole neurons (or filters in case of convolutional layers) are removed the method is called \textit{structured}, otherwise it is usually refered to as \textit{unstructured} pruning. 
In practice, structured pruning achieves lower architecture compression but can be more readily leveraged without dedicated inference engine and hardware.

Practically speaking, the pruning community has been focusing its efforts on reducing the number of operations performed without changing their nature (i.e. many multiplications are turned into fewer multiplications).
In this work, we propose to change this paradigm.
To do so, we propose to reduce the number of computations by changing identical operations to identical memory access to a unique operation.

Also, following \cite{richards2014big} the usage of data is a crucial limit for pruning algorithm and many other post-treatment protocols.
In recent years, data usage has become a  growing concern in ethics and laws have been established to protect privacy and human rights.
For instance in health and military services, patients and users data are sensible while trained models may be more easily accessible.
This motivates the design of data-free pruning methods such as Dream \cite{yin2020dreaming} and our previous work RED \REDref.

In this paper, we propose a new method that we call \oursvir, corresponds to the RED++ : data-free pruning of deep neural networks \textit{via} input splitting and output merging. 
This hashing step introduces redundant operations in DNN weight distributions, and we provide theoretical guarantees as well as empirical evidence on the robustness of the hashed predictive function w.r.t. the original one. 
Stemming from these redundancies, we propose to remove layer-wise structural redundancies in DNN: First, we merge together output-wise dependencies. Second, we handle input-wise redundancies with a novel input-wise layer splitting. We provide theoretical guarantees on the expected pruning factor of the proposed method as well as its optimality within the bounds of the assumptions of the pruning framework. The whole method, dubbed \oursvir, achieves outstanding performance on every popular DNN architecture, significantly outperforming other recent data-free methods, and often rivalling data-driven ones. To sum it up, our contributions are:
\begin{itemize}
    \item A data-free adaptive scalar hashing protocol to introduce redundancies in DNN weights from our work \REDref. We provide theoretical guarantees on its efficiency in addition to a data-free method to ensure the accuracy preservation. These results are empirically validated on various tasks and architectures.
    \item A novel data-free redundant operation suppression in DNNs via neuron merging (merge) and per input operation splitting (split). This protocol removes all redundant operations performed by a DNN which makes it optimal. We study the expected pruning from these steps and empirically validate these results.
    \item The \ours method (hashing + merging + splitting) is tested on standard benchmarks and achieves remarkable results, outclassing all previous data-free pruning approaches and rivaling many data-driven methods. We also provide novel state-of-the-art performances on the most recent image transformers.
\end{itemize}

For the sake of clarity we opted for a separate presentation of the hashing and architecture compression steps.
After a presentation of the related work on both hashing and pruning in Section \ref{sec:related}, we extend our work from \REDref on hashing via a thorough theoretical and empirical study of its impact in Section \ref{sec:preliminaries}.
After the discussion of the experiments conducted on DNN weight hashing, we present the mathematical properties of our novel pruning algorithm which leverages operation redundancies in Section \ref{sec:preliminaries_method}.
The summary of the work and future extensions are discussed as a conclusion in Section \ref{sec:conclusion}.
%%%%%%%%%%%%%%%%%%%%%%%%%%%%%%%%%%%%%%%%%%%%%%%%%%%%%%%%%%%%%%%%%%%%%%%%%%%%%%%%%%%%%
%
%
%                                    Related Work
%
%
%%%%%%%%%%%%%%%%%%%%%%%%%%%%%%%%%%%%%%%%%%%%%%%%%%%%%%%%%%%%%%%%%%%%%%%%%%%%%%%%%%%%%
\section{Related Work}\label{sec:related}
%
%                       -----   DNN Hashing    -----
%

\subsection{DNN Hashing}\label{sec:related_hashing}

Before detailing the pruning literature, we put the emphasis on a common aspect of pruning algorithms whether they are magnitude based (e.g. \cite{frankle2018lottery,lin2020dynamic,park2020lookahead,lee2019signal}) or similarity-based (e.g. \cite{srinivas2015data, kim2020neuron}) which is the approximation of the original predictive function. 
To the best of our knowledge, every pruning method modifies the predictive function in the sens that we can create inputs that would lead to different logits between the original and pruned network. 
In \REDref, we proposed to separate the approximation on the predictive function and the modifications on the architecture. 
In deep learning, such approximation is usually formulated as an optimisation problem involving the DNN layer activations over the training database \cite{liu2017deep,wang2019haq,stock2019and}, making these methods data-driven.
An intuitive way to approximate weight values independently of data, is hashing using k-means \cite{lloyd1982least} methods. 
However these methods require heuristics or priors (to determine $k$ for instance) on the weight values distribution.
Another well known way to hash DNN weights is quantization \cite{nagel2019data} which consists in mapping weight values to a finite regular grid.
Authors in \cite{nagel2019data,meller2019same,zhao2019improving} propose to tackle data-free quantization.
However these methods come with drawbacks on the accuracy which are a consequence of the constraint on the regularity of the grid.

In this work we propose a data-free hashing method previously described in RED \REDref which doesn't require any assumptions on the weight distribution. 
We provide an extended empirical study of hashing as a pre-process for pruning as well as theoretical guarantees on the preservation of the accuracy of the predictive function after hashing.
%
%                       -----   DNN Pruning    -----
%

\subsection{DNN Pruning}
%
%
% Structured / Unstructured
\hspace{14pt}\textbf{Sparsity: }
As thoroughly studied in \cite{renda2020comparing}, pruning methods are divided into either unstructured or structured approaches. 
On the one hand, unstructured approaches \cite{frankle2018lottery,lin2020dynamic,park2020lookahead,lee2019signal} remove individual weights regardless of their structure (e.g. neurons, filters,...) : hence, they rely on sparse matrices optimization for runtime improvements. 
On the other hand, the so-called structured approaches \cite{liebenwein2019provable, li2016pruning, he2018soft, luo2017thinet} remove specific filters, channels or neurons.
Although the latter usually results in lower pruning ratios, they generally allow significant runtime reduction without further work required.
%
%
% Magnitude-based / Similarity-based 

\textbf{Heuristics: }
Pruning algorithms usually make assumptions on what characterizes the importance of weights or features. 
The most generic is the \textit{magnitude-based} paradigm in which the method erases weights with the lowest magnitude. 
LDI algorithm \cite{lee2019signal} defines the magnitude as the drift of DNN weights from their initial values during training to select and replace irrelevant weights. 
The Hrank method \cite{lin2020hrank} uses the rank of feature maps as a magnitude measurement. 
The work in \cite{lin2020dynamic} extends the single layer magnitude-based weight pruning to a simultaneous multi-layer optimization, in order to better preserve the representation power during training. 
Other approaches, such as \cite{liebenwein2019provable, meng2020pruning}, apply an absolute magnitude-based pruning scheme which removes a number of channels or neurons but generally causes accuracy drop. 
To address this problem, most of these methods usually fine-tune the pruned model afterwards for enhanced performance \cite{liu2018rethinking,gale2019state,frankle2018lottery}. 
By contrast there exists so called similarity-based pruning methods such as \cite{srinivas2015data, kim2020neuron}.
The idea consists in combining weights based on a measure of their resemblance usually by taking the average or the principal components of similar neurons. 
% For instance, \cite{ayinde2019redundant} use a graph-based-group-average technique to define proximity between feature maps in order to prune filters. 
% Recently, in \cite{kim2020neuron}, similarity-based data-free pruning is applied by merging neurons. 
% The authors decompose the weight tensor into new weights and a scaling matrix to fold into the next layer. 
% However, in this approach, neurons from each layer are merged at once, rather than using a fine-grained pairwise similarity. 
% data-free
%
%
% Data-free / Data-driven

\textbf{Data usage: }
Almost all the aforementioned pruning methods can be classified as \textit{data-driven} as they involve, to some extent, the use of a training database. 
Some approaches, such as \cite{liebenwein2019provable, meng2020pruning}, train an over-parameterized model from which they remove a number of channels or neurons but generally causes accuracy drop. 
To address this problem, most of these methods usually fine-tune this pruned model for enhanced performance \cite{liu2018rethinking,gale2019state,frankle2018lottery}. 
Other pruning algorithms prune the network at initialization \cite{lee2019signal} before performing a training phase.
Nevertheless, there exists a number of so-called \textit{data-free} approaches which do not require any data or fine-tuning of the pruned network, however usually resulting in lower pruning ratios. 
For instance, \cite{tanaka2020pruning} is a data-free pruning method with lower performances but still addresses the layer-collapse issue (where all the weights in a layer are set to $0$) by preserving the total synaptic saliency scores.
DREAM \cite{yin2020dreaming} proposes to use standard data-driven pruning algorithm over data-free generated data from DNN weight values.
Data-free approaches are usually outperformed by a significant margin on standard benchmarks by data-driven ones especially by methods who use extensive re-training.

% us
In this work we propose to improve both data-free and similarity-based pruning. 
To do so, we introduce split for optimal redundancies removal in DNNs.

%%%%%%%%%%%%%%%%%%%%%%%%%%%%%%%%%%%%%%%%%%%%%%%%%%%%%%%%%%%%%%%%%%%%%%%%%%%%%%%%%%%%%
%
%
%                                    Methodology
%
%
%%%%%%%%%%%%%%%%%%%%%%%%%%%%%%%%%%%%%%%%%%%%%%%%%%%%%%%%%%%%%%%%%%%%%%%%%%%%%%%%%%%%%
\section{DNN Weights Hashing : Method, Theory and Experiments}\label{sec:preliminaries}
%
%       Fig. 1 : RED process
%
\begin{figure*}[!t]
    \centering
    \includegraphics[width = \linewidth]{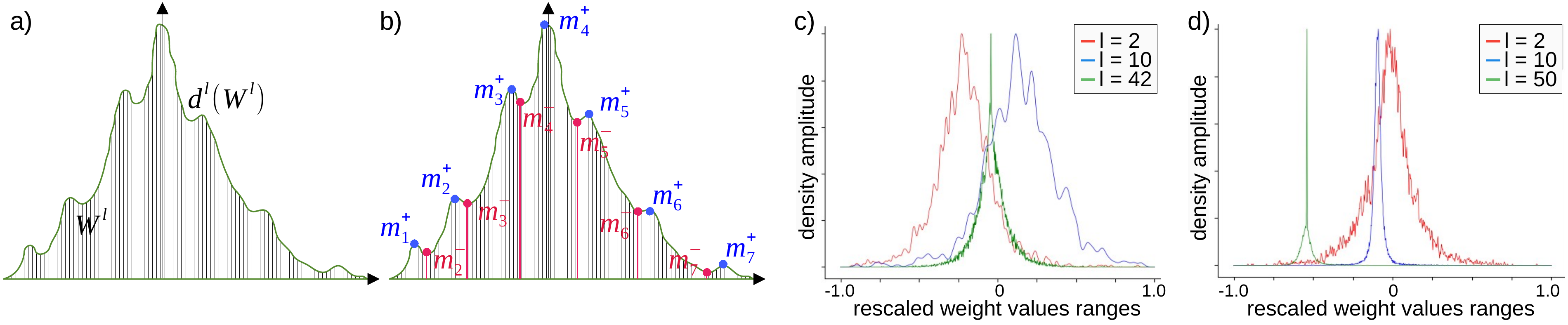}
    \caption{For each layer $l$, the hashing algorithm estimates the density $\mathbb{P}^l_w$ based on the weights $W^l$ using KDE (a). The estimated density $d^l$ is then evaluated on a discreet grid in order to obtain local minima $M_l^-$ and maxima $M^+_l$ (b) to obtain a partition and hashed values for the hashing function $\mathfrak{h}^l$. On the right side, we show examples of weight distributions for real layers from ResNet 56 on Cifar10 (c) and ResNet 50 on ImageNet (d).}
    \label{fig:hashing}
\end{figure*}
The proposed method works on the scalar weights of a trained DNN. 
These values are encoded using floating-point numbers: thus each parameter may have a value among approximately $4$ billion distinct possibilities. 
Therefore two neurons defined by their weight vectors have close to zero chance to be identical. 
In order to consider vector-level redundancies it is necessary to use an approximation in similarity-based pruning. 
In what follows, we consider a trained neural network $f$ with $L$ layers ${(f^l)}_{l\in \llbracket 1;L\rrbracket}$ with weights ${(W^l)}_{l\in \llbracket 1;L\rrbracket}$ and output dimensions ${(n^l)}_{l\in \llbracket 1;L\rrbracket}$. 
We will consider both scalar values $w$ and weight tensors $W$ through out our study.
We assume the weights to be sampled from a distribution $\mathbb{P}^l_w$.
%
%                       -----   algorithm for hashing    -----
%
\subsection{Hashing Protocol}\label{sec:hashing_protocol}
A naive approach to reduce the number of distinct weight values is quantization into int8 values using the operator introduced in \cite{krishnamoorthi2018quantizing}.
This operator transforms the continuous weight values $w$ uniformly into $256$ evenly spread values $\tilde w$ (which is our generic notation for discreet weights).
This will constitute our baseline.
\newline
Contrary to such uniform quantization scheme, we propose an adaptive transformation.
As illustrated on Fig. \ref{fig:hashing} (a and b), for every layer $l$ the kernel density estimation $d^l$ of $W^l$ is defined by 
\begin{equation}\label{eq:kde}
    d^l : \omega \mapsto \frac{1}{n^l \times n^{l-1} \Delta_l}\sum_{w \in W^l} K\left( \frac{\omega-w}{\Delta_l} \right)
\end{equation}
where $K$ is the Gaussian kernel of bandwidth $\Delta_l$. We extract the local minima $M^-_l = {(m_k^-)}_{k}$ and maxima $M^+_l = {(m_k^+)}_{k}$ from $d^l$.
The local minima $M^-_l = {(m_k^-)}_{k}$ define a partition of the support of $W^l$ such that we can define the hashing function $\mathfrak{h}^l$ as
\begin{equation}
    \mathfrak{h}^l : \omega \mapsto \sum_{m_k^+ \in M^+_l} m_k^+ \mathbbm{1}_{\{\omega \in \left[m_k^- ; m_{k+1}^- \right[\}}
\end{equation}
Then the hashed layer $\tilde f^l$ is defined as $\tilde w^l = \mathfrak{h}^l(w^l)$ and $\tilde W^l$ denotes the matrix of hashed weights.
In practice, we find that per-layer DNN weight values concentrate around a limited number of local modes as shown in Fig. \ref{fig:hashing} (c and d).
As a consequence, the proposed data-free adaptive hashing significantly reduces the number of distinct values among the weights, and introduces redundancies both at the vector and tensor level.
The bandwidth $\Delta_l$ affects the number of redundancies (the higher the bandwidth the higher the scalar redundancies) and the accuracy (the higher the bandwidth the lower the accuracy).
The proposed hashing method introduces an error from the original model. In what follows, we bound this error on the DNN logits in order to provide theoretical guarantees on the model accuracy preservation through hashing.
%
%                       -----   Perceptron hashing    -----
%
\subsection{Single Layer Preservation Through Hashing}
For simplicity, we propose to first study the case of a simple perceptron $f$ with weights $W \in \mathbb{R}^{n^{0}\times n^{1}}$ before extending the result to the whole network. 
%Let's note $M^+$ the local maxima and $M^-$ the local minima of the kernel density estimation $d$ of the weight distribution. 
We introduce the pseudo distance between the original weight values and their hashed version $|w - \tilde w|$ (note that this is a difference between scalars). 
We have the following upper bound:
\begin{equation}
    |w - \tilde w| \leq \min_{m\in M^+}\{m > w\} - \max_{m\in M^+}\{m < w\} 
\end{equation}
This follows from the fact that the hashing is based on a partition in segments of the support of $W$, \textit{i.e.} for all consecutive pair $m_{i},m_{i+1} \in M^+$ we have $\forall w \in  [m_{i};m_{i+1}]$, $\tilde w \in \{m_{i},m_{i+1}\}$. 
We can deduce an upper bound on the expected value of $|w - \tilde w|$
\begin{equation}\label{eq:A}
    \mathbb{E}[|w - \tilde w|] \leq \sum_{m_{i},m_{i+1}\in M^+ \cup \{m_{1}^-, m_{|M^-|}^-\}} \!\!\!\!\!\!\!\!(m_{i+1} - m_{i}) \int_{m_{i}}^{m_{i+1}} \mathbb{P}_w dw
\end{equation}
where $dw$ is the density of $W$ and $m_{1}^-, m_{|M^-|}^-$ are the minimum and maximum of $M^-$ respectively where $\mathbb{P}_w$ is the density of the weights.

We can compute another upper bound using two properties of the density estimation used. 
First, on each interval $[m_{i}^-,m_{i+1}^-]$ the function $d^l$ is mono-modal. 
Second, its variance can be estimated. We have:
\begin{equation}
    \mathbb{E} \left[ |w - \tilde w| \right] = \int_{W^{\min}}^{W^{\max}} |w - \tilde w| \mathbb{P}_wdw 
\end{equation}
where $W^{\min}$ and $W^{\max}$ are the minimum and maximum of $W$ respectively. 
This expression can be split in a sum using the partitioning used in the hashing
\begin{equation}
    \mathbb{E} \left[ |w - \tilde w| \right] = \sum_{i = 1}^{|M^+|}\int_{m_{i}^-}^{m_{i+1}^-} \left|w - m_{i}^+\right| \mathbb{P}_wdw
\end{equation}
where the $m_{i}^+$ and $m_{i}^-$ are the ordered elements of $M^+$ and $M^-$ respectively. 
We know from \cite{basu1997mean}, that the mean and mode of a unimodal distribution lie within $\sqrt{3}$ standard deviations of each other.
Because of the kernel used in KDE, we have a sum of Gaussian, thus we can use the following property.
The average absolute distance of a random sample to the mean of the distribution, under the Gaussian prior is $\sigma \sqrt{2/\pi}$. 
Using the triangular inequality we deduce that
\begin{equation}
    \mathbb{E} \left[ |w - \tilde w| \right] \approx \!\!\int\!\! |w - \tilde w| d(w)dw  \leq \max_{i \in \llbracket 1; |M^+|\rrbracket} \sigma_i \left(\sqrt{\frac{2}{\pi}}+\sqrt{3}\right)
\end{equation}
where $\sigma_i$ is the standard deviation of the distribution $d$ restricted to $[(M^-)_i;(M^-)_{i+1}]$. 
We can compute $\sigma_i$ using eq \ref{eq:kde}.
\begin{equation}
    \sigma_i^2 = \int_{(M^-)_i}^{(M^-)_{i+1}} \frac{1}{n^i\Delta} \sum_{n = 1}^{n^i} K\left(\frac{w -w_n}{\Delta}\right) w^2 dw - \mathbb{E}_d[X]^2
\end{equation}
Because $K$ is a Gaussian kernel we can deduce the value of $\sigma_i$ and update the formula for $\mathbb{E} \left[ |w - \tilde w| \right]$ the upper bound.
\begin{equation}\label{eq:B}
\mathbb{E} \left[ |w - \tilde w| \right] \leq \frac{\Delta}{\sqrt{2\pi}} \left(\sqrt{\frac{2}{\pi}}+\sqrt{3}\right)
\end{equation}
Now we have two upper bounds for $\mathbb{E}$. 
First $A$, from equation \ref{eq:A}, which based on the pseudo distance $\mathfrak{d}$ and the hashing function properties. 
Second $B$, from equation \ref{eq:B}, which is based on the kernel density estimation properties. We can combine these bounds to obtain
\begin{equation}\label{eq:hashing_1}
\mathbb{E} \left[ |w - \tilde w| \right] \leq \min\{A,B\} = u
\end{equation}
$u$ is our per-weight upper bound on the error made.
In practice both bounds $A$ and $B$ are relevant and used according to the situation.
This result can be extended to DNN with multiple layers.
%
%                       -----   MLP Hashing    -----
%
\subsection{Multi-layer Preservation Through Hashing}
In order to generalize the previous upper bound to a feed forward CNN with $L$ layers, we first compute the upper bound for a layer $f^l$.
The upper bound $u_l$ measures the error on each weight values.
The weights are used in scalar products with $n^{l-1}\times w^l\times h^l\times n^l$ elements.
The average error behaves following the Central Limit Theorem, as detailed in Appendix \ref{sec:appendix_U_details}.
This adds a multiplicative term $\frac{1}{\sqrt{n^{l-1}w^lh^l}}$ to the expected error per layer.
Furthermore, we need to take into account the activation function.
Assuming a ReLU activation function, statistically, the average proportion of negative inputs is given by the CDF of a Gaussian distribution of parameters $\mu^l$ and $\sigma^l$.
These statistics are obtained from the batch normalization layers.
This adds a multiplicative term $\left(1 - \text{erf}\left(\frac{-\mu^l}{\sigma^l\sqrt{2}}\right)\right)$.
Therefore we get
\begin{equation}\label{eq:hash_per_layer}
    \mathbbm{E}[\|\tilde f^l - f^l\|] \leq \frac{u_l}{\sqrt{n^{l-1}w^lh^l}}\left(1 - \text{erf}\left(\frac{-\mu^l}{\sigma^l\sqrt{2}}\right)\right)  
\end{equation}
where $u_l$ is the layer-wise upper bound described in equation \ref{eq:hashing_1} .
We extend this result recursively across all layers.
This is detailed in section \ref{sec:appendix_U_details} of the appendix.
% using the triangular inequality, to compute an upper bound $U$ on $\mathbbm{E}_X[\|\tilde f - f\|]$ with explicit formula:
\begin{equation}\label{eq:hashing_2}
    U = \prod_{l=1}^{L} \left(\frac{u_l}{\sqrt{n^{l-1}w^lh^l}}  \left(  1  -  \text{erf} \left( \frac{-\mu^l}{\sigma^l\sqrt{2}}\right)   \right)\right) + \mu^{l} - \prod_{l=1}^{L} \mu^{l}
\end{equation}
The value of each $u_l$ is a linear function of the bandwidth $\Delta_l$.
% We observe that the lower the bandwidth the lower the error which is obvious for the error but is also satisfied by the upper bound.
Therefore $U$ is also a linear function of $\Delta_l$ which is very low in practice (see section \ref{sec:hashing_protocol}): thus the reason why, in practice, the hashing error is very low.
To assess this theoretical study, we still need to empirically validate the hashing protocol.
First, we validate that the upper bound $U$ provides practical data-free guarantees on the accuracy preservation.
Second, we provide empirical results on the percentage of removed weight values and accuracy drop from hashing.

\subsection{Upper Bound Error on the Hashing Error}
\begin{table*}[th]
\renewcommand{\arraystretch}{1.15}
\caption{
Empirical evaluation of the theoretical study on the expected error from weight values hashing. 
We evaluate the average of the logits ($E[\text{norm}]$) which serves the purpose of data-free evaluation of the hashing protocol by comparing its value to $U$ defined in equation \ref{eq:hashing_2}.
This value's tightness is verified by comparing it to the empirical measure $\mathbb{E}_X[\| f\|]$.
We also measure the value of the upper bound $U$ relatively to $\mathbbm{E}_X[\|\tilde f - f\|]$.
Then we estimate its tightness to the data-driven value using the proposed metric defined as the ratio $(U-\mathbbm{E}_X[\|\tilde f - f\|])/U$.
}
\label{tab:upper_bound}
\centering
\setlength\tabcolsep{3pt}
\begin{tabular}{|c||c|c|c|c||c|c||c|c|c|c|c||c|c|c|c|c|c|c|c|}
\hline
\multicolumn{20}{|c|}{\cellcolor{Gray} Cifar10}\\
\hline
Architecture & \multicolumn{4}{c||}{ResNet} & \multicolumn{2}{c||}{W. ResNet} & \multicolumn{5}{c||}{MobileNet v2} & \multicolumn{8}{c|}{EfficientNet}   \\
\hline
Model & 20 & 56 & 110 & 164 & 28-10 & 40-4 & 0.35 & 0.5 & 0.75 & 1 & 1.4 & B0 & B1 & B2 & B3 & B4 & B5 & B6 & B7 \\
\hline
\hline
U & 3.1 & 1.7 & 2.7 & 3.1 & 3.5 & 3.0 & 3.2 & 2.6 & 2.7 & 4.0 & 2.4 & 4.9 & 7.0 & 1.4 & 7.4 & 6.7 & 3.7 & 2.8 & 2.8 \\
\hline
\hline
$E[\text{norm}]$ & 27 & 26 & 26 & 26 & 22 & 22 & 23 & 24 & 25 & 24 & 26 & 25 & 20 & 21 & 20 & 19 & 19 & 22 & 23 \\
$\mathbb{E}_X[\| f\|]$ & 33 & 35 & 35 & 36 & 36 & 36 & 35 & 36 & 35 & 36 & 35 & 22 & 22 & 21 & 22 & 22 & 22 & 20 & 20 \\
% $\text{EAR}_1^2$ & 10 & 10 & 11 & 11 & 11 & 11 & 11 & 11 & 11 & 11 & 11 & 8 & 8 & 8 & 8 & 8 & 8 & 8 & 8 \\
% $\text{EAR}_1^2$ (std) & 6 & 6 & 6 & 6 & 4 & 4 & 5 & 5 & 5 & 4 & 4 & 5 & 4 & 2 & 4 & 2 & 2 & 3 & 2 \\
\hline
\hline
Tightness to $\mathbbm{E}_X[\|\tilde f - f\|]$ & 7\% & 54\% & 44\% & 13\% & 64\% & 79\% & 22\% & 6\% & 61\% & 65\% & 28\% & 57\% & 52\% & 35\% & 21\% & 65\% & 34\% & 13\% & 17\% \\
% Drift from $\max_x\{u_x\}$ & -2.3 & -0.2 & -0.4 & -0.9 & -3.0 & -2.7 & -2.9 & -4.5 & -2.4 & -3.7 & -5.9 & -1.9 & -0.3 & 0.1 & -1.4 & -0.5 & -3.4 & -2.9 & -3.1 \\
\hline
\end{tabular}
\newline
\vspace*{0.4 cm}
\newline
\begin{tabular}{|c||c|c|c||c|c|c|c|c||c|c|c|c|c|c|c|c|}
\hline
\multicolumn{17}{|c|}{\cellcolor{Gray}ImageNet}\\
\hline
Architecture & \multicolumn{3}{c||}{ResNet} & \multicolumn{5}{c||}{MobileNet v2} & \multicolumn{8}{c|}{EfficientNet}   \\
\hline
Model & 50 & 101 & 152 & 0.35 & 0.5 & 0.75 & 1 & 1.4 & B0 & B1 & B2 & B3 & B4 & B5 & B6 & B7 \\
\hline
\hline
U & 0.01 & 0.01 & 0.01 & 0.04 & 0.05 & 0.04 & 0.08 & 0.03 & 0.01 & 0.01 & 0.01 & 0.01 & 0.01 & 0.01 & 0.01 & 0.01  \\
\hline
\hline
$E[\text{norm}]$ & 5.1 & 5.7 & 5.0 & 0.2 & 0.4 & 0.4 & 0.3 & 0.7 & 0.6 & 0.7 & 0.7 & 0.5 & 0.4 & 0.6 & 0.7 & 0.5 \\
$\mathbb{E}_X[\| f\|]$ & 5.8 & 5.5 & 5.8 & 0.4 & 0.5 & 0.5 & 0.6 & 0.6 & 0.6 & 0.7 & 0.6 & 0.6 & 0.6 & 0.7 & 0.7 & 0.7 \\
% $\text{EAR}_1^2$ & 1.7 & 1.7 & 1.7 & 0.3 & 0.3 & 0.4 & 0.4 & 0.5 & 0.4 & 0.5 & 0.5 & 0.5 & 0.5 & 0.6 & 0.6 & 0.6 \\
% $\text{EAR}_1^2$ (std) & 0.3 & 0.3 & 0.3 & 0.3 & 0.3 & 0.3 & 0.3 & 0.3 & 0.3 & 0.3 & 0.3 & 0.3 & 0.3 & 0.3 & 0.3 & 0.3 \\
\hline
\hline
Tightness to $\mathbbm{E}_X[\|\tilde f - f\|]$ & 6\% & 1\% & 1\% & 37\% & 55\% & 38\% & 27\% & 28\% & 11\% & 2\% & 41\% & 13\% & 3\% & 1\% & 19\% & 9\% \\
% Drift from $\max_x\{u_x\}$ & -10.6 & -4.8 & -4.7 & -23.1 & -9.3 & -18.2 & -12.5 & -29.5 & -94.4 & -126.5 & -146.1 & -134.2 & -170.5 & -375.0 & -137.8 & -4.5 \\
\hline
\end{tabular}
\end{table*}

% We can assert that using an adaptive data-free hashing protocol allows for a great reduction of the number of unique weight values. 
% Hashing introduces a change in the predictive function, \textit{i.e.} there exists an input $x$ such that $f(x) \neq \tilde f(x)$.
In this section we answer the following question: do we have a data-free way to predict whether or not the hashing will harm the accuracy of the DNN?
To tackle this issue, we propose a simple criterion based on the upper bound $U$ and a data-free estimation of the norm of the logits.
The information about datasets and models is provided in Appendix \ref{sec:appendix_datasets}. 
The models presented in the main paper are ConvNets and are standard in recent computer vision benchmarks, we also tested architectures which heavily rely on fully-connected layers in Section \ref{sec:appendix_fully_connected_layers}).
For implementation details please refer to Appendix \ref{sec:appendix_implem_details}.

\subsubsection{Data-Free Evaluation of the Hashing Protocol}
% We introduced a theoretical upper bound $U$, provided in equations \ref{eq:hashing_1} and \ref{eq:hashing_2}, for the expected error due to the hashing step.
% We recall that $U$ is an upper bound on the norm of the expected error on the logits before softmax in the case of classification, noted $\mathbbm{E}_X[\|\tilde f - f\|]$. 
Assuming that we have the value of the expected norm of the logits $\mathbb{E}_X[\| f\|]$, we can derive the following criterion to decipher whether the hashing is detrimental to the network accuracy:
\begin{enumerate}
    \item if $\frac{U}{\mathbb{E}_X[\| f\|]} << 1$ then the modifications from hashing won't have a significant impact on the logits. Therefore their order is likely unchanged. 
    \item if $\frac{U}{\mathbb{E}_X[\| f\|]} \simeq 1$ then the modifications from hashing may have a significant impact on the predictions. As $U$ is an upper bound we can't conclude yet.
\end{enumerate}
The values of $U$ and $\mathbb{E}_X[\| f\|]$ are reported in the first and third rows of the TABLE \ref{tab:upper_bound}.
% We observe on all ImageNet networks that the bound $U$ is at least $4$ times smaller than the estimated norm.
On Cifar10, the ratio never exceeds $1/10$ to the exception of EfficientNets B0-B4 where the ratio reaches $1/3$.
Considering the number of logits and fact that recent DNNs are over-confident, our algorithm leads us to conclude that these networks accuracy will be preserved.
Similarly, on ImageNet, the value of the ratio $\frac{U}{\mathbb{E}_X[\| f\|]}$ ranges from $1/570$ to $1/8$ which corresponds to the case $\frac{U}{\mathbb{E}_X[\| f\|]} << 1$.
The modifications from hashing won't have a significant impact on the predictions which matches our empirical validation from section \ref{sec:experiments_hashing}.
However the value of $\mathbb{E}_X[\| f\|]$ is obtained using data.

To make this evaluation data-free, we propose a data-free estimator of the expected norm of the logits, referred to as $E[\text{norm}]$.
To do this, we use the values of the weights of the batch normalization layers \cite{ioffe2015batch} as an estimate of the expected value of the shallowest layer and then use the last kernel with the linearity of the expectation to compute $E[\text{norm}]$.
Because we use $\mathbb{E}_X[\| f\|]$ as the denominator, we need the estimate to be as close as possible while satisfying $E[\text{norm}]\leq \mathbb{E}_X[\| f\|]$ in order not to have an over-confident criterion.
Empirically, this is verified as shown in TABLE \ref{tab:upper_bound}.
In practice, we also use the variance and its estimate in order to obtain a confidence interval.
The variances $V[\text{norm}]$ are equal to about a third of the expectations.
When comparing the real values to the estimates we get $\frac{\mathbbm{E}_X[\|\tilde f - f\|]}{\mathbb{E}_X[\| f\|]} \leq \frac{U}{E[\text{norm}]} < \frac{U}{E[\text{norm}] - V[\text{norm}]} << 1$.
In consequence, our algorithm provides a data-free way to ensure the preservation of the accuracy of a hashed DNN.

\subsubsection{Tightness of the Upper Bound}
Let's assume we have a DNN such that $\frac{U}{E[\text{norm}]} << 1$ is not considered satisfied, \textit{i.e.} we are in the second case of our criterion.
The question that remains is: can we still have $\frac{\mathbbm{E}_X[\|\tilde f - f\|]}{E[\text{norm}]} << 1$?
This is equivalent to ask if $U$ is tight to $\mathbbm{E}_X[\|\tilde f - f\|]$.
We define the tightness metric as the ratio $(U-\mathbbm{E}_X[\|\tilde f - f\|])/U$ and report its values in TABLE \ref{tab:upper_bound}.
Because the theoretical value $U$ is an upper bound on $\mathbbm{E}_X[\|\tilde f - f\|]$ the value of the previous ratio necessarily lies in $[0;1]$ (the closer to $0$ the tighter the upper bound).
We observe values ranging from $7\%$ to $80\%$ with most values under $50\%$ on models trained for Cifar10 which indicates that the proposed upper bound is relatively tight.
However this difficulty appears to be solved on ImageNet as almost no network exceed $40\%$ and many values are below $10\%$.
The most remarkable results appear on EfficientNets on ImageNet and are stable across all network sizes.
This confirms that relatively to $\mathbbm{E}_X[\|\tilde f - f\|]$ the upper bound $U$ is tight.
In consequence, when we don't have the condition $\frac{U}{E[\text{norm}]} << 1$ we can assert with relative confidence that the hashing significantly modified the predictive function
% We push the tightness study further in Appendix \ref{sec:appendix_tightness++}.

Stemming on these results, we have an efficient and verifiable, adaptive data-free method for scalar redundancies extraction.
We leverage these redundancies via pruning.

%
%                       -----   Empirical Val of hashing     -----
%
\subsection{Weight Values Redundancy from Hashing}\label{sec:experiments_hashing}
\begin{table*}[!th]
\renewcommand{\arraystretch}{1.15}
\caption{Ablation results of the compression due to hashing defined as the percentage of removed weight values (dubbed \% reduction). We also report the accuracy drop on the test set. 
The uniform and adaptive hashing both remove an important proportion of weight values.
The adaptive hashing systematically preserves the accuracy contrary to the uniform approach.
}
\label{tab:hashing_ratio}
\centering
\setlength\tabcolsep{2.75pt}
\begin{tabular}{|c|c||c|c|c|c||c|c||c|c|c|c|c||c|c|c|c|c|c|c|c|}
\hline
\multicolumn{21}{|c|}{\cellcolor{Gray} Cifar10}\\
\hline
\multicolumn{2}{|c||}{Architecture} & \multicolumn{4}{c||}{ResNet} & \multicolumn{2}{c||}{W. ResNet} & \multicolumn{5}{c||}{MobileNet v2} & \multicolumn{8}{c|}{EfficientNet}   \\
\hline
\multicolumn{2}{|c||}{Model} & 20 & 56 & 110 & 164 & 28-10 & 40-4 & 0.35 & 0.5 & 0.75 & 1 & 1.4 & B0 & B1 & B2 & B3 & B4 & B5 & B6 & B7 \\
\hline
\hline
\multirow{2}{*}{\% Reduction} & uniform & 98.0 & 98.4 & 98.5 & 98.9 & 99.7 & 99.8 & \underline{98.4} & \underline{98.4} & 98.4 & 98.6 & 99.0 & 98.5 & 98.5 & 98.6 & 98.7 & 98.9 & 99.0 & 99.1 & 99.2 \\
 & \textbf{adaptive} & \underline{98.9} & \underline{99.0} & \underline{99.1} & \underline{99.1} & \underline{99.9} & \underline{99.9} & 96.7 & 98.0 & \underline{98.9} & \underline{99.4} & \underline{99.6} & \underline{99.4} & \underline{99.5} & \underline{99.5} & \underline{99.6} & \underline{99.7} & \underline{99.8} & \underline{99.8} & \underline{99.9} \\
\hline
\multirow{2}{*}{\% Accuracy drop} & uniform & 0.58 & 0.45 & 0.87 & 0.89 & 0.72 & 0.68 & \textbf{0.00} & 0.01 & 0.10 & 0.00 & 0.00 & 0.00 & 0.01 & 0.03 & \textbf{0.00} & 0.01 & \textbf{0.00} & \textbf{0.00} & \textbf{0.00} \\
 & \textbf{adaptive} & \textbf{0.07} & \textbf{-0.03} & \textbf{0.10} & \textbf{0.01} & \textbf{-0.02} & \textbf{0.00} & \textbf{0.00} & \textbf{0.00} & \textbf{0.00} & \textbf{-0.01} & \textbf{-0.04} & \textbf{-0.02} & \textbf{0.00} & \textbf{0.10} & \textbf{0.00} & \textbf{0.00} & \textbf{0.00} & 0.01 & \textbf{0.00} \\
\hline
\end{tabular}
\newline
\vspace*{0.4 cm}
\newline
\begin{tabular}{|c|c||c|c|c||c|c|c|c|c||c|c|c|c|c|c|c|c|}
\hline
\multicolumn{18}{|c|}{\cellcolor{Gray}ImageNet}\\
\hline
\multicolumn{2}{|c||}{Architecture} & \multicolumn{3}{c||}{ResNet} & \multicolumn{5}{c||}{MobileNet v2} & \multicolumn{8}{c|}{EfficientNet}   \\
\hline
\multicolumn{2}{|c||}{Model} & 50 & 101 & 152 & 0.35 & 0.5 & 0.75 & 1 & 1.4 & B0 & B1 & B2 & B3 & B4 & B5 & B6 & B7 \\
\hline
\hline
\multirow{2}{*}{\% Reduction} & uniform & 99.6 & 99.5 & 99.5 & 21.2 & 26.4 & 27.7 & 24.7 & 22.0 & \underline{98.5} & \underline{98.5} & \underline{98.6} & \underline{98.7} & \underline{98.9} & \underline{99.0} & \underline{99.1} & \underline{99.2}  \\
& \textbf{adaptive} & \underline{99.9} & \underline{99.9} & \underline{99.9} & \underline{22.0} & \underline{32.9} & \underline{49.1} & \underline{61.3} & \underline{97.4} & 72.3 & 80.1 & 81.4 & 84.3 & 87.8 & 97.3 & 97.5 & 97.8 \\
\hline
\multirow{2}{*}{\% Accuracy drop} & uniform  & \textbf{0.00} & 0.01 & \textbf{0.00} & 3.96 & 6.90 & 7.71 & 5.79 & 8.19 & 3.36 & 14.32 & 6.69 & 4.91 & 3.85 & 2.30 & 4.90 & 3.30 \\
& \textbf{adaptive} & \textbf{0.00} & \textbf{0.00} & \textbf{0.00} & \textbf{0.25} & \textbf{0.56} & \textbf{0.30} & \textbf{0.60} & \textbf{0.10} & \textbf{0.44} & \textbf{0.36} & \textbf{0.10} & \textbf{0.25} & \textbf{0.00} & \textbf{0.00} & \textbf{0.00} & \textbf{0.01} \\
\hline
\end{tabular}
\end{table*}
In order to validate the effectiveness of the proposed hashing for exhibiting redundancies in DNNs, We define the hashing compression ratio as the percentage of removed distinct weight values and report this metric in Table \ref{tab:hashing_ratio} as well as the accuracy drop.

First, we compared the naive uniform baseline (int8 quantization) with the proposed adaptive method: on Cifar10, we observe that the proposed adaptive hashing achieves better figures both in terms of percentage of removed weight values and accuracy.
On all networks, the hashing introduces negligible accuracy drops while reaching more than 99\% compression in term of distinct values.
On ImageNet, the results echo the Cifar10 performances, except on EfficientNets B0-B4, where uniform quantization achieves higher compression ratios at the expense of significant (up 14.32\%) accuracy drop.
Meanwhile, our adaptive hashing still achieves important compression ratios on these networks with very low (up to 0.44\%) loss in accuracy.
Overall, on the two datasets, our adaptive hashing allows to greatly reduce the number of distinct weight values.
The performance on Cifar10 are outstanding as we reach 99\% parameters reduction on almost all architectures.
We observe a form of invariance to the network's depth as the performance is almost identical for ResNet 20, 56 and 110.
We also observe robustness to architectural changes such as depthwise convolution as MobileNets and ResNets are compressed with a similar intensity on Cifar10. 
On ImageNet networks, we observe impressive results on large networks, e.g. ResNets with a compression ratio of $99.9\%$.
We also evaluated the method over already compact architectures (MobileNets and EfficientNets) and adapted the protocol by considering only the layers that would be a target for pruning, \textit{i.e.} we didn't hash the depth-wise convolutional layers as hashing these had a dramatic impact on the networks accuracy. 
Furthermore, the predictions of these models (MobileNet) have a lower range and especially the difference between the two highest logits is much narrower (especially on ImageNet): therefore these models may be much more sensible to weight values modification.
Despite this, the method still removes a considerable number of distinct values for these small networks, e.g. MobileNet V2 with width multiplier 0.35 loses 22\% of its unique values while MobileNet V2 with width multiplier 1.4 loses 97\%.

Overall the hashing mechanism removes more than $90\%$ weight values, to the exception of MobileNet models on ImageNet which proves the interest of the proposed adaptive hashing as a standalone memory footprint reduction technique (e.g. as compared with the naive baseline).
Last but not least, it allows to introduce redundancies in DNNs, a fact that is leveraged in our DNN pruning method.

%%%%%%%%%%%%%%%%%%%%%%%%%%%%%%%%%%%%%%%%%%%%%%%%%%%%%%%%%%%%%%%%%%%%%%%%%%%%%%%%%%%%%
%
%
%                               merge, split, ...
%
%
%%%%%%%%%%%%%%%%%%%%%%%%%%%%%%%%%%%%%%%%%%%%%%%%%%%%%%%%%%%%%%%%%%%%%%%%%%%%%%%%%%%%%
\section{Removal of Redundant Operations in DNNs}\label{sec:preliminaries_method}
From $W^l\in \mathbb{R}^{w\times h\times n^{l-1} \times n^l}$ to $\tilde W^l$, its hashed counter part, the number of distinct weight values is reduced. 
Thus, the hashing step exhibits scalar redundancies. 
In order to exploit these redundancies we propose to leverage two pruning mechanisms that exploit structural redundancies.
%
%       Fig. : Merge
%
\begin{figure}[t]
    \centering
    \includegraphics[width = \linewidth]{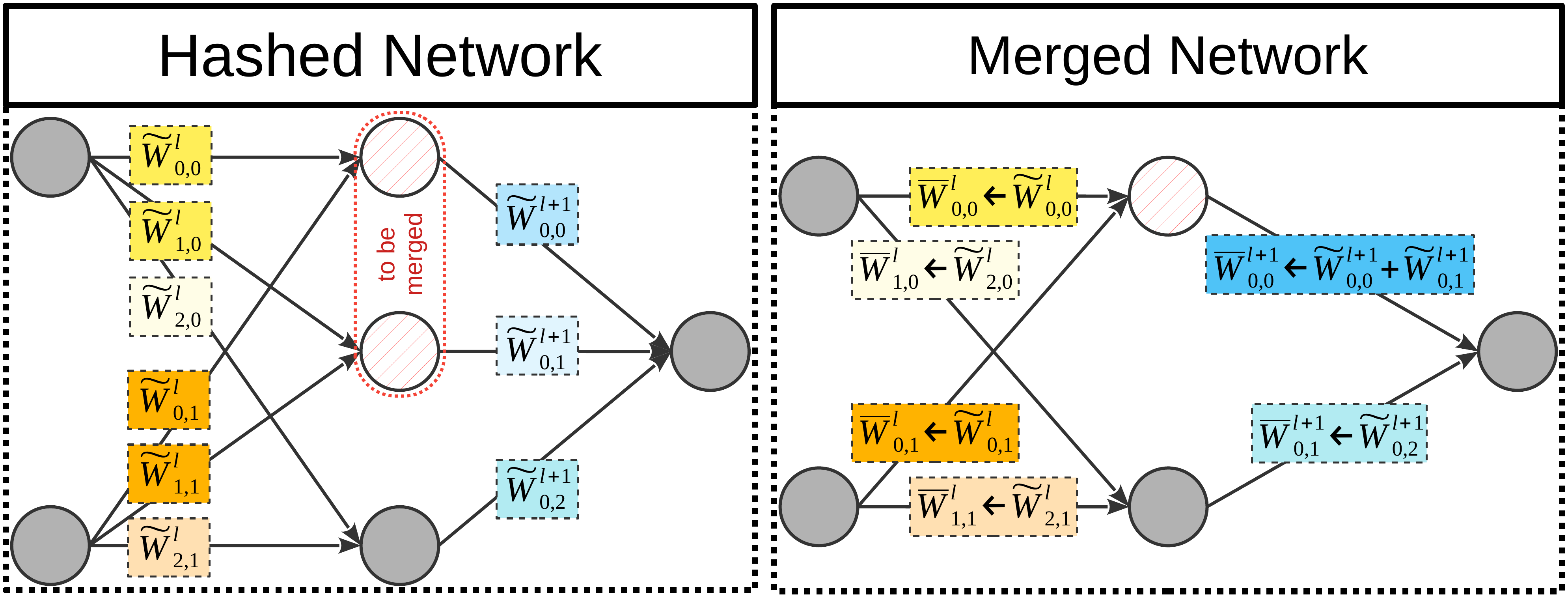}
    \caption{Neuron merging in the case of a fully-connected layer $l$ with weights $\tilde W^l_{i,j}$. Similarity between weight values is displayed by color, e.g. $\tilde W^l_{0,0} = \tilde W^l_{1,0}$. The merged network weights $\bar W$ are obtained by merging the first 2 neurons of layer $l$ and updating the consecutive layer by simply summing the corresponding weights.}
    \label{fig:merge}
\end{figure}
\subsection{Output-wise Neuron Merging}
For every layer $l$ the redundancy of information computed by two filters (or neurons in the case of fully connected layers) is defined by the euclidean distance between their respective weight values.
Formally, for $\tilde W^l \in {M^+_l}^{w \times h \times n^{l-1} \times n^l}$ we define a similarity-based filter clustering. 
Two filters are in the same cluster if and only if their distance is below the $\alpha^l\%$ smallest non-zero distance of the current layer. 
In Fig \ref{fig:merge}, this is illustrated in the first layer with the first two neurons being identical as indicated by the colormap. 
We detail the selection of the layer-wise hyperparameter $\alpha^l$ in Appendix \ref{sec:appendix_alpha}. 
Let's assume that we have $\bar n^l < n^l$ distinct neurons, \textit{i.e.} $\bar n^l$ distinct rows in $\tilde W^l$.  
Let $ \bar W^l$ be the sub-matrix  of $\tilde W^l$ containing all the distinct rows of $\tilde W^l$ only once and $\bar W^{l+1}$ the matrix such that all columns from $\tilde W^{l+1}$ that were applied to identical neurons of $\tilde W^l$ are summed. 
Then, for each output dimension $i$ we have:
\begin{equation}\label{eq:merge}
\begin{aligned}
    \Big(\tilde f^{l+1} (z)\Big)_{i} & =\sum_j^{n^l} \tilde W^{l+1}_{i,j} \sigma\left(\sum_k^{n^l} \tilde W^l_{j,k} z_k \right)\\
                               & \simeq \sum_j^{\bar n^l} \bar W^{l+1}_{i,j} \sigma\left( \sum_k^{\bar n^l} \bar W^l_{j,k} z_k \right) = \Big(\bar f^{l+1} (z)\Big)_{i}
\end{aligned}
\end{equation}
We obtain the clustered weights $\bar W^l \in \mathbb{R}^{w \times h \times n^{l-1} \times \bar n^l}$ with $\bar n^l \leq n^l$. 
In order to preserve the coherence of the predictive function, we update the consecutive layers by adding the weights of the corresponding input dimensions as illustrated in Fig \ref{fig:merge}. 
In the special case $\alpha^l = 0$ we only merge strictly identical neurons and we have $\bar f^l = \tilde f^l$ and only have equalities in equation \ref{eq:merge}.
We introduce the value $\alpha^*$ for $\alpha$ which corresponds to the highest value of $\alpha$ such that the accuracy is preserved, \textit{i.e.} the function is changed but the precision on a validation set is preserved.
The proposed neuron merging algorithm provides a simple way to remove redundant weights in DNNs.
In addition to that, in what follows, we introduce an input-wise decomposition for pruning.
\subsection{Input-wise Neuron Splitting}
%
%       Fig. 2 : split
%
\begin{figure}[t]
    \centering
    \includegraphics[width = \linewidth]{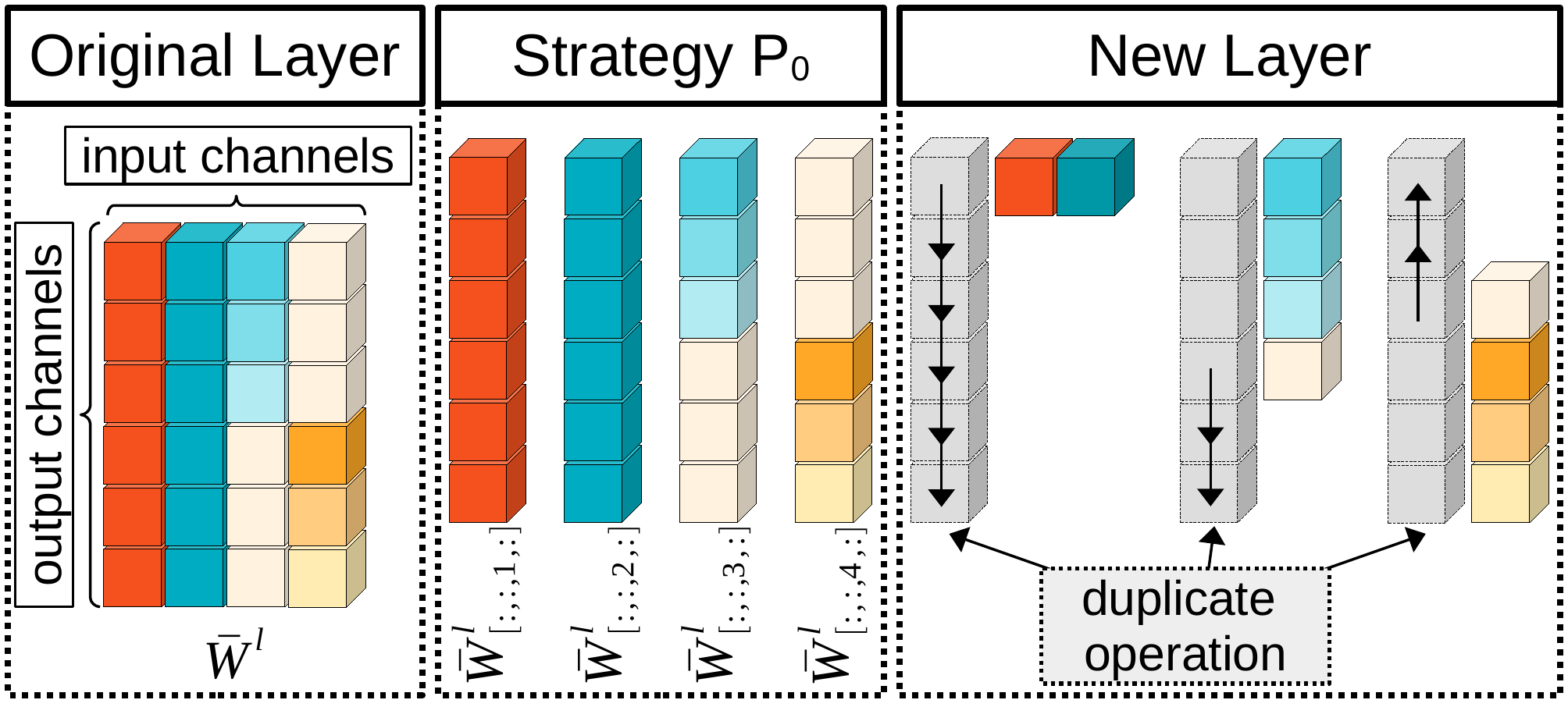}
    \caption{Splitting strategy for a layer with weights $\bar W^l \in \mathbb{R}^{4\times 6}$ where colors indicate weight values. We apply the singleton strategy by splitting the layer w.r.t. its input. We then collapse identical subsets of each components of the partition and define the corresponding duplication functions.}
    \label{fig:split_layer}
\end{figure}
We propose a novel way to split layers in order to reveal local structured redundancies. 
We assume the mathematical operations (e.g. multiplications, additions) to be bottlenecks when compared to memory allocation. 
This condition is motivated in Appendix \ref{sec:appendix_multi_dupli}. 
When two identical operations (as represented by redundant weight values) are performed on the same inputs, the outputs will systematically be identical.
Based on this observation, in this section we describe a splitting scheme to reduce the number of operations.\newline
Formally, let's consider a convolutional layer $l$ with weights $\bar W^l \in \mathbb{R}^{w\times h\times n^{l-1} \times n^l}$ and activation function $\sigma$. 
The case of convolutional layers extends the case fully connected layers by setting $w = h = 1$.
Let $\mathcal{C}^l$ be a partition of the set of inputs $\llbracket 1;n^{l-1} \rrbracket$.
We define the partial layers $\bar f_{C_i}^l$, for each element $(C_i^l)$ of $\mathcal{C}^l$ (\textit{i.e.} for each sub-set of input dimensions), by
\begin{equation}
    \left(\bar f_{C_i^l}^l\right)_j : x \mapsto \sum_{c \in C_i^l} \bar W^l_{[:,:,c,j]} * x
\end{equation}
Each operation performed on an input channel is defined by the corresponding weight values $\bar W^l_{[:,:,c,:]}$. 
In the proposed splitting method, we collapse identical sub neurons by removing redundant operations. 
To preserve the output values and dimensions we add a duplication function $d^l_C$ which allocates the results to their corresponding position in the original layer $\bar f^l$. 
Let's assume that in the sub set $C_i^l$ of the partition $\mathcal{C}^l$ we have $j$ and $j'$ such that 
\begin{equation}
  \bar  W^l_{[:,:,c,j]} = \bar W^l_{[:,:,c,j']}
\end{equation}
Then the corresponding 2D convolutions are identical and applied to the same input, \textit{i.e.} $\left(\bar f_{C_i^l}^l\right)_j = \left(\bar f_{C_i^l}^l\right)_{j'}$. 
Under this condition, split will collapse the two operations such that the resulting layer $\hat f^l$ won't contain the weights from $\left(\bar f_{C_i^l}^l\right)_{j'}$ and will instead duplicate the results from $\left(\hat f_{C_i^l}^l\right)_j$ at $j'$ position in the output.
The resulting partial layer $\hat f_{C_i}^l$ with weights $\hat W^l$ is a concatenation on the input channel axis of the corresponding unique subset of weight values and we have
\begin{equation}
     \bar f^l = \sigma \left( \sum_{C \in \mathcal{C}^l} d^l_{C}\left(\hat f_{C}^l\right) \right) = \hat f^l
\end{equation}
Note that this decomposition doesn't modify the hashed predictive function.
The remaining problem is how to find the clusters $(C_i^l)$.
The brute force approach to find the strategy of clustering that offers the lowest number of remaining operations (which defines optimality here) is intractable.
It would require to search among the $B_{n^{l-1}}$ possible partitions, where $B_{n^{l-1}}$ is the Bell's number \cite{barsky2004nombres} for a set of $n^{l-1}$ distinct elements which is defined recursively as 
\begin{equation}
    B_{n^{l-1}} = \sum_{k=1}^{n^{l-1}-1} \begin{pmatrix} n^{l-1}-1 \\k \end{pmatrix} B_k
\end{equation}
To cope with the problem, there is an optimal strategy with regards to the number of multiplications which can be found in a constant time. 
The strategy consists in taking the partition in singletons of $\llbracket 1;n^{l-1} \rrbracket$, as illustrated in Fig \ref{fig:split_layer}. 
We also provide proof that this protocol is optimal on $1\times1$ convolutions under minimal assumptions.
% \begin{table}[!t]
% \renewcommand{\arraystretch}{1.15}
% \caption{Notations used for each step of the proposed protocol \oursvir.}
% \label{tab:notation}
% \centering
% \begin{tabular}{c|c|c|c|c}
% \hline
% Hashed & \xmark & \cmark & \cmark & \cmark \\
% Merged & \xmark & \xmark & \cmark & \cmark \\
% Split  & \xmark & \xmark & \xmark & \cmark \\
% \hline
% \hline
% Weights (scalars) & $w$ & $\tilde w$ & $\bar w$ & $\hat w$ \\
% Weights (tensors) & $W$ & $\tilde W$ & $\bar W$ & $\hat W$ \\
% Layer & $f^l$ & $\tilde f^l$ & $\bar f^l$ & $\hat f^l$ \\
% DNN & $f$ & $\tilde f$ & $\bar f$ & $\hat f$ \\
% \hline
% \end{tabular}
% \end{table}\newline
% We summarize the notations in TABLE \ref{tab:notation}.
% Finally, please note that we discuss the implementation of split in Appendix \ref{sec:appendix_implems_of_split} to put forward leads for implementation optimization and current limitations.
%
%                       -----   Optimality of split    -----
%
\subsection{Optimality of the Singleton Decomposition}
We provide proof that the proposed decomposition is optimal according to the criterion of the number of operations (multiplications in the case of fully connected layers or convolutions in the case convolutional layers). 
Let's consider the weights tensor $W^l \in M^{n^{l-1} \times n^l}$, where $M$ is $\mathbb{R}$ for fully connected layers and $\mathbb{R}^{w\times h}$ for convolutional layers. 
\begin{lemma}\label{thm:P_0}
The partitioning strategy that minimizes the number of operations performed by the layer is the decomposition in singleton.
\end{lemma}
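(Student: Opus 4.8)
The plan is to recast the optimization as the minimization of an explicit cost functional over partitions, and then to show that this functional is monotone under refinement, so that the finest partition—the decomposition in singletons—is optimal. First I would fix the cost model precisely. For a cluster $C \subseteq \llbracket 1;n^{l-1}\rrbracket$, call two output indices $j,j'$ equivalent over $C$ when $\bar W^l_{[:,:,c,j]} = \bar W^l_{[:,:,c,j']}$ for every $c \in C$, and let $D(C)$ be the number of resulting equivalence classes, i.e.\ the number of \emph{distinct} sub-neurons supported on $C$. By construction of split, each distinct sub-neuron is computed exactly once, and its evaluation costs one elementary operation (a scalar product for a fully connected layer, a $1\times1$ convolution otherwise) per input channel it aggregates. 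Hence the partial layer attached to $C$ performs $|C|\,D(C)$ operations, and the total cost of a partition $\mathcal{C}^l$ is
\begin{equation}
  \mathrm{cost}(\mathcal{C}^l) = \sum_{C \in \mathcal{C}^l} |C|\, D(C).
\end{equation}
The duplication functions $d^l_C$ only reassign already-computed results, which by the assumption of Appendix~\ref{sec:appendix_multi_dupli} is free relative to an operation, so they do not enter the count.

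The key step is a monotonicity property of $D$ under restriction. For $C' \subseteq C$, restricting a sub-neuron on $C$ to its coordinates in $C'$ is well defined on equivalence classes and sends the distinct sub-neurons on $C$ \emph{onto} the distinct sub-neurons on $C'$; since a surjection cannot increase cardinality, this yields $D(C') \le D(C)$. Applying it to each singleton $\{c\} \subseteq C$ gives $D(\{c\}) \le D(C)$ for all $c \in C$, hence
\begin{equation}
  \sum_{c \in C} D(\{c\}) \;\le\; \sum_{c \in C} D(C) \;=\; |C|\, D(C).
\end{equation}
This inequality says exactly that replacing any single cluster $C$ by its partition into singletons does not increase the cost. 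Since every partition of $\llbracket 1;n^{l-1}\rrbracket$ refines, cluster by cluster, to the singleton partition through finitely many such replacements, $\mathrm{cost}$ is minimized by the singleton decomposition, which is the claim of the Lemma.

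The only delicate point I anticipate is pinning down the cost model so that the Lemma is genuinely true. I must argue that the operations split is allowed to merge inside a cluster are exactly those whose \emph{entire} weight sub-vector over $C$ coincides—not merely those sharing some individual channel weights—which is precisely why the count is $|C|\,D(C)$ and not something finer, and which in turn is what makes the coarser clustering strictly wasteful: grouping channels forces the shared single-channel products to be recomputed once per distinct combination rather than once in total. I must also invoke the $1\times1$ convolution hypothesis, so that each aggregated channel contributes one indivisible operation with no intra-kernel spatial sharing left to exploit. Once the functional and the restriction-surjectivity $D(C') \le D(C)$ are established, the monotonicity argument is immediate and needs no case analysis on the structure of $\bar W^l$.
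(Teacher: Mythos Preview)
Your proof is correct and follows essentially the same logic as the paper's: both rest on the observation that the merging condition over a cluster $C$ (equality of weights for \emph{all} $c\in C$) is strictly stronger than the condition over any singleton $\{c\}\subseteq C$, so coarser partitions can only identify fewer redundancies. Your version makes this precise by introducing the explicit cost $|C|\,D(C)$ and proving the monotonicity $D(C')\le D(C)$ via a surjection of equivalence classes, whereas the paper simply notes that the general-partition fuse condition is ``clearly a special case'' of the singleton one and concludes $n_{\mathcal{C}^l}\ge n_{P^l}$ directly after a brief existence argument.
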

\begin{proof}
We start by showing that the problem has at least one optimal solution. 
For any partition $\mathcal{C}^l = {(C_i^l)}_{i\in I}$ of $\llbracket 1; n^{l-1} \rrbracket$ the resulting collapsed layer $\hat f_{\mathcal{C}^l}^l$ performs a finite number of mathematical operations $n_{\mathcal{C}^l}$. 
Then we can define the objective function
\begin{equation}
    \begin{matrix}
        G^l : & \text{Part}(\llbracket 1;n^{l-1} \rrbracket) & \rightarrow & \llbracket 1;n^{l-1}n^l \rrbracket\\
            & \mathcal{C}^l & \mapsto & n_{\mathcal{C}^l}
    \end{matrix}
\end{equation}
where $\text{Part}(\llbracket 1;n^{l-1} \rrbracket)$ is the set of all possible partitions of $\llbracket 1;n^{l-1} \rrbracket$. 
We obtain the upper bound $n^{l-1}n^l$ on the number of operations from the property of the split decomposition which can only diminish that number. 
Because of that, $G^l$ is a function between two finite sets and thus admits at least one minimum. 
Therefore the minimization problem has a solution.\newline
Now let's assume a partition $\mathcal{C}^l$ of $\llbracket 1;n^{l-1} \rrbracket$ contains a set $C$. 
Then the split condition to fuse two sets of operations is: \textit{there exists $j$ and $j'$ such that $W^l_{[c_i,j]} = W^l_{[c_i,j']}$ for all $c_i\in C$}.
This is clearly a special case of the proposed partition. 
We recall that the proposed partition induces the following fuse condition: there exists $j$ and $j'$ such that $W^l_{[c,j]} = W^l_{[c,j']}$ for some $c \in \llbracket 1;n^{l-1} \rrbracket$. 
Thus we have $n_{\mathcal{C}^l} \geq n_{P^l}$ where $P$ refers to the proposed strategy. 
\end{proof}

\subsection{Expected Pruning}\label{sec:preliminaries_birthday}
%
%           Fig. 3 : comparison birthday problem results for split and merge
%
\begin{figure}[t]
    \centering
    \includegraphics[width = 0.9\linewidth]{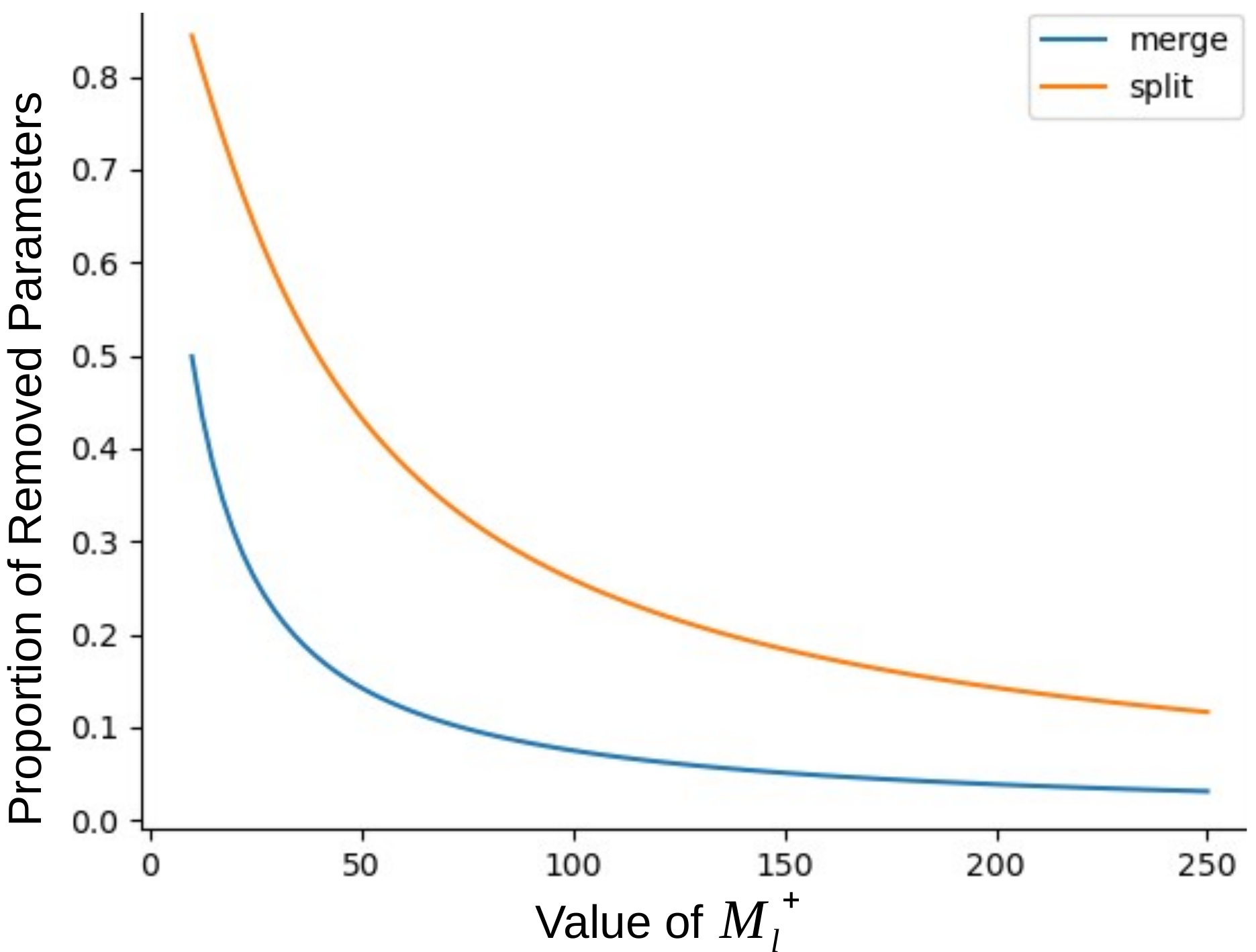}
    \caption{Given a layer $l$ with weights $W^l \in \mathbb{R}^{3\times 3\times 32 \times 128}$ sampled uniformly, we apply hashing such that we keep $M^+_l$ values. We plot the evolution of $\mathbb{E}_m$ (blue) and $\mathbb{E}_s$ (orange) for different values of $M^+_l$ using the formula from equations \ref{eq:uniform_E_m} and \ref{eq:uniform_E_s}. We observe the effect of the hashing on the expected pruning.
    }
    \label{fig:Es_Em}
\end{figure}
We showed that we can remove redundant operations \textit{via} output-wise merging of similar neurons as well as input-wise splitting.
In this section, we conduct a probabilistic study of the relationship between reducing the number of unique weight values and the resulting pruning ratio in DNNs. 
This can be cast as a case of the generalized Birthday Problem. 
We assume that each weight is sampled independently from a unique distribution.
First, assuming the least favorable prior (uniform) on this distribution, we want to compute the expected pruning factor from the merging step as well as the splitting step noted $\mathbb{E}_m$ and $\mathbb{E}_s$ respectively.\newline
First we recall the probability $\mathbb{P}_n^K(k)$ to get $k$ distinct values from $K$ possible ones with a sampling size $n$.
The probability that each of $n$ samples belongs to those $k$ values is $(k/K)^n$. 
However, this also includes cases where some of the $k$ numbers were not chosen. 
The inclusion-exclusion rule says that the probability of drawing all of those $k$ values is $(-1)^i \begin{pmatrix} k \\ i \end{pmatrix} \left(\frac{k-i}{K}\right)^n$.
As a consequence, we have:
\begin{equation}
    \mathbb{P}_n^K(k) = \begin{pmatrix} K \\ k \end{pmatrix} \sum_{i=0}^{k} (-1)^i \begin{pmatrix} k \\ i \end{pmatrix} \left(\frac{k-i}{K}\right)^n
\end{equation}
from this expression we can compute $\mathbb{E}_m$ and replace $K = whn^{l-1}M^+_l$ and $n = n^l$. 
This result is obtained by developing the standard definition of the expected value of a discreet variable.
For $\mathbb{E}=\mathbb{E}_m$ or $\mathbb{E}_s$ we have:
\begin{equation}
    \mathbb{E} = \sum_{k=0}^K k \mathbb{P}_n^K(k) = K\left(1 - \left( 1 - \frac{1}{K}\right)^n\right)
\end{equation}
it follows that the expected pruning ratio of the merging step is:
\begin{equation}\label{eq:uniform_E_m}
    \mathbb{E}_m = 1-\frac{whn^{l-1}M^+_l \left(1-\left(1-\frac{1}{ whn^{l-1}M^+_l}\right)^{n^l}\right)}{n^l}
\end{equation}
We illustrate in Fig \ref{fig:Es_Em} the variation of $\mathbb{E}_m$ as a function of the proportion of the unique values among hashed weights over the original number of distinct weights, \textit{i.e.} $ M^+_l/ (whn^{l-1}n^l)$ for different values of $n^{l-1}$.
These results suggest that the larger the input dimension $n^{l-1}$ the lower the number of redundancies. 
The value of $\mathbb{E}_s$ can be computed similarly with different values of $K$ and $n$. 
We obtain
\begin{equation}\label{eq:uniform_E_s}
    \mathbb{E}_s = 1-\frac{whM^+_l \left(1-\left(1-\frac{1}{whM^+_l}\right)^{n^l}\right)}{n^l}
\end{equation}
Similarly to $\mathbb{E}_m$, we illustrate $ \mathbb{E}_s$ in Fig \ref{fig:Es_Em}. 
From these results follows 
\begin{lemma}
Under the uniform prior, for any layer with weights $W^l\in \mathbb{R}^{w\times h\times n^{l-1} \times n^l}$ and hashed weights $\tilde W^l$ we have
\begin{equation}
    \mathbb{E}_s > \mathbb{E}_m
\end{equation}
\end{lemma}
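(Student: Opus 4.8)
The plan is to collapse the comparison of $\mathbb{E}_s$ and $\mathbb{E}_m$ to a single monotonicity statement about the expected-number-of-distinct-values function obtained earlier. Writing $g(K) = K\bigl(1 - (1-1/K)^{n^l}\bigr)$ for the expected count of distinct values when $n^l$ samples are drawn from $K$ equiprobable categories, both quantities share the form $\mathbb{E} = 1 - g(K)/n^l$ with the \emph{same} sampling size $n^l$, differing only through $K_m = whn^{l-1}M^+_l$ for merging and $K_s = whM^+_l$ for splitting. Since $K_m = n^{l-1}K_s$ and $n^{l-1} \geq 2$ for any non-degenerate layer, we have $K_m > K_s$, so the target inequality $\mathbb{E}_s > \mathbb{E}_m$ is equivalent to $g(K_s) < g(K_m)$, i.e. to the claim that $g$ is strictly increasing in $K$ (a sign-of-derivative question once $n^l \geq 2$).

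To prove this monotonicity I would differentiate directly. A short product-rule computation gives $g'(K) = 1 - (1-1/K)^{n^l} - \tfrac{n^l}{K}(1-1/K)^{n^l-1}$. Introducing the substitution $u = 1 - 1/K \in (0,1)$ and using $1/K = 1-u$ collapses this to the compact form $g'(K) = 1 - h(u)$, where $h(u) = u^{n^l-1}\bigl(n^l - (n^l-1)u\bigr)$. The entire statement therefore hinges on the elementary bound $h(u) < 1$ for all $u \in (0,1)$.

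To close, I would examine $h$ on $[0,1]$. Evaluating at the right endpoint gives $h(1) = n^l - (n^l-1) = 1$, while differentiating produces the clean factorization $h'(u) = n^l(n^l-1)u^{n^l-2}(1-u)$, which is strictly positive on $(0,1)$ whenever $n^l \geq 2$. Hence $h$ is strictly increasing and stays below its endpoint value $h(1) = 1$ on the open interval, so $g'(K) = 1 - h(u) > 0$ and $g$ is strictly increasing. Chaining back through the reduction, $K_m > K_s$ yields $g(K_m) > g(K_s)$ and therefore $\mathbb{E}_s > \mathbb{E}_m$, as claimed.

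The main obstacle is the monotonicity of $g$ itself: increasing $K$ simultaneously adds categories, which pushes the expected number of distinct values up, and lowers each category's hit probability, which pulls it down, so the sign of $g'$ is genuinely unclear a priori. The substitution $u = 1 - 1/K$ is the crucial move that turns this competition into the transparent factorization of $h'$, exposing its positivity. A purely probabilistic alternative — a coupling establishing that sampling from $K+1$ categories stochastically dominates sampling from $K$ in the number of distinct values hit — would also give the result, but the calculus route is shorter and sidesteps the careful construction of the coupling.
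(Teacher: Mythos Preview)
Your proof is correct. The paper itself gives no detailed argument: it merely states that the lemma ``follows'' from the explicit formulas for $\mathbb{E}_m$ and $\mathbb{E}_s$, leaving the monotonicity of $K\mapsto K\bigl(1-(1-1/K)^{n^l}\bigr)$ to the reader. Your calculus route, in particular the substitution $u=1-1/K$ and the clean factorization $h'(u)=n^l(n^l-1)u^{n^l-2}(1-u)$, is a tidy way to make that implicit step rigorous; the non-degeneracy assumptions $n^{l-1}\geq 2$ and $n^l\geq 2$ that you flag are indeed needed for strict inequality and are tacit in the paper as well.
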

This result extends to any prior on the distribution of single weight values of $\tilde W^l$.
\subsection{Prior-less Generalized Birthday Problem}\label{sec:generalized_bday}
In order to extend the previous result we need a preliminary result which extends the birthday problem to non-uniform sampling. 
In this section we note $E = \llbracket 1;m \rrbracket$ the sampling space, ${(X_j)}_{i\llbracket 1; n\rrbracket}$ the i.i.d. variables sampled from law $\mathcal{L}$ which satisfy
\begin{equation}\label{eq:hypothesis_Lm_Ls}
    \forall i\in E, \quad p_i = \mathbb{P}_\mathcal{L} (X_j = i) > 0
\end{equation}
We introduce the variables ${(Y_i)}_{i\in E}$ the number of samples $X_j = i$, \textit{i.e.} $Y_i = \sum_{j=1}^n \mathbbm{1}_{X_j = i}$. 
In our case, we compute the expected value of the number $V$ of distinct values in the sample, \textit{i.e.}
\begin{equation}\label{eq:prior_less_bday_}
    V = \sum_{i=1}^m \mathbbm{1}_{Y_i > 0} = m - \sum_{i=1}^{m}\mathbbm{1}_{Y_i = 0}
\end{equation}
We consider $i$ a value belonging to a subset $K\subset E$.
We note $B_i$ the event where the value $i$ is not sampled:
\begin{equation}\label{eq:prior_less_bday}
\begin{cases}
    \mathbb{P}_\mathcal{L}\left(\underset{i\in K}{\bigcap} B_i\right) = \left(1- \sum_{i\in K} p_i\right)^n \\
    \mathbb{E}_\mathcal{L}[V] = \sum_{v=1}^m v \sum_{K\in A^m_{m-v}} \left(1- \sum_{i\in K} p_i\right)^n
\end{cases}
\end{equation}
where $A^m_{m-v}$ is the set of arrangements of $E$. 
\begin{lemma}\label{thm:all_prior}
Under the priors $\mathcal{L}^m$ on $x = (x_1,\dots,x_{n^{l-1}}) \in {M^l_+}^{w\times h \times n^{l-1}}$ and $\mathcal{L}^s$ on $x_j \in {M^l_+}^{w\times h}$, for any layer with hashed weights $\tilde W^l\in \mathbb{R}^{w\times h\times n^{l-1} \times n^l}$, such that $n^l = n^{l-1}$ we have
\begin{equation}\label{eq:thm_guarantees++}
    \mathbb{E}_s > \mathbb{E}_m
\end{equation}
\end{lemma}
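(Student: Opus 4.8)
The plan is to reduce the claimed inequality to a comparison of expected numbers of \emph{distinct} operations, and then to settle that comparison by a coupling argument. Recall that both quantities are expected pruning ratios of the form $1 - \mathbb{E}_\mathcal{L}[V]/n^l$, where $V$ counts the surviving (distinct) operations: for the merge step $V_m$ is the number of distinct full neurons among the $n^l$ rows of $\tilde W^l$, while for the split step $V_s$ is, on a fixed input channel $c$, the number of distinct sub-neurons $\tilde W^l_{[:,:,c,j]}$ for $j\in\llbracket 1;n^l\rrbracket$. Since each scalar weight is drawn independently from one law, the channel-$c$ marginal of $\mathcal{L}^m$ is exactly $\mathcal{L}^s$ and the per-channel counts are identically distributed; hence $\mathbb{E}_\mathcal{L}[V_s]$ is independent of $c$ and is given by eq \ref{eq:prior_less_bday} with $m=m_s$ and probabilities $(p_i^s)$, whereas $\mathbb{E}_\mathcal{L}[V_m]$ is the same formula with $m=m_s^{\,n^{l-1}}$ and product probabilities $p^m_{(i_1,\dots,i_{n^{l-1}})}=\prod_c p^s_{i_c}$. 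Thus $\mathbb{E}_s>\mathbb{E}_m$ is equivalent to $\mathbb{E}_\mathcal{L}[V_s]<\mathbb{E}_\mathcal{L}[V_m]$.

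First I would prove the non-strict inequality by coupling both counts on the same draw. Sampling the $n^l$ neurons i.i.d. from $\mathcal{L}^m$ simultaneously realises, for every channel $c$, the $n^l$ sub-neurons as the images under the deterministic projection $\phi_c:(x_1,\dots,x_{n^{l-1}})\mapsto x_c$. Because a projection can only identify values, never split them, $V_s=|\phi_c(\{\text{distinct neurons}\})|\le V_m$ holds surely; taking expectations gives $\mathbb{E}_\mathcal{L}[V_s]\le\mathbb{E}_\mathcal{L}[V_m]$, i.e. $\mathbb{E}_s\ge\mathbb{E}_m$.

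The main obstacle is upgrading this to a strict inequality. For that I would exhibit an event of positive probability on which $\phi_c$ fails to be injective on the set of distinct neurons. Consider two fixed output neurons agreeing on channel $c$ but differing on some other channel $c'$; by independence across channels, $\mathbb{P}(\text{agree on }c)=\sum_i (p_i^s)^2>0$ and $\mathbb{P}(\text{differ on }c')=1-\sum_i (p_i^s)^2>0$, so this event has positive probability precisely when the prior is non-degenerate, i.e. $m_s\ge 2$, and when there are at least two channels and two neurons, which is guaranteed by the hypothesis $n^l=n^{l-1}\ge 2$ together with $p_i>0$ from \ref{eq:hypothesis_Lm_Ls}. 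On this event the two distinct neurons collapse to a single sub-neuron, so $V_s\le V_m-1$, whence $\mathbb{E}_\mathcal{L}[V_s]<\mathbb{E}_\mathcal{L}[V_m]$ and therefore $\mathbb{E}_s>\mathbb{E}_m$.

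I expect the delicate points to be (i) justifying that the channel-$c$ marginal of $\mathcal{L}^m$ is $\mathcal{L}^s$ and that the per-channel distinct counts are exchangeable, so that a single channel controls the whole split ratio, and (ii) carefully excluding the degenerate regimes ($m_s=1$ or $n^{l-1}=1$) in which $\phi_c$ is injective and only equality holds. A purely computational alternative would substitute the two instances of eq \ref{eq:prior_less_bday} and compare $\sum_i (1-p_i^s)^{n^l}$ against $\sum_i (1-p_i^m)^{n^l}$ term by term, but the coupling makes both the inequality and its strictness transparent while avoiding any manipulation of the doubly-exponential cardinality $m_s^{\,n^{l-1}}$.
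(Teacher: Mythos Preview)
Your proof is correct and takes a genuinely different route from the paper. Both arguments begin with the same reduction, $\mathbb{E}_s>\mathbb{E}_m \Leftrightarrow \mathbb{E}_{\mathcal{L}^s}[V]<\mathbb{E}_{\mathcal{L}^m}[V]$, and both exploit the product structure $\mathbb{P}_{\mathcal{L}^m}(x)=\prod_c \mathbb{P}_{\mathcal{L}^s}(x_c)$. From there the paper substitutes the arrangement-sum formula of eq.~\ref{eq:prior_less_bday} for each prior and concludes by asserting that the $\mathcal{L}^m$ expression is ``a larger sum of larger terms'' because $m_m>m_s$ and each full-neuron probability is strictly smaller than any of its coordinate probabilities. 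You instead couple the two samples on one probability space by realising the split sample as the coordinate projection $\phi_c$ of the merge sample, obtain the almost-sure inequality $V_s\le V_m$ directly, and then upgrade to strictness via a single explicit positive-probability collision event.

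What your approach buys is rigor precisely where the paper is informal: comparing two sums indexed by $A^{m_m}_{m_m-v}$ and $A^{m_s}_{m_s-v}$ term-by-term is not immediate, whereas your pointwise domination is. It also sidesteps any dependence on the exact cardinality of the full-neuron space (you write $m_s^{\,n^{l-1}}$, the paper writes $|M^+_l|whn^{l-1}$; your argument is indifferent to this). The paper's route, in exchange, keeps the explicit birthday formula visible, which ties in naturally with the uniform-prior computations of eqs.~\ref{eq:uniform_E_m}--\ref{eq:uniform_E_s}, but at the cost of a less transparent final step. Your handling of the degenerate boundary cases ($m_s=1$ or $n^{l-1}=1$, where only equality holds) is also something the paper does not address.
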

\begin{proof}
The expected pruning factor for the splitting step is higher than for the merging step if and only if $\mathbb{E}_{\mathcal{L}^s}[V] < \mathbb{E}_{\mathcal{L}^m}[V]$, with $V$ defined in equation \ref{eq:prior_less_bday_}, that is to say that the expected number of remaining neurons is lower under the prior $\mathcal{L}^s$.
Let's develop $\mathbb{E}_{\mathcal{L}^m}[V]$, following eq \ref{eq:prior_less_bday} we simply replace the notations with $m = n^l$ and the $p_i = \mathbb{P}_{\mathcal{L}^m}$. 
Thus we get
\begin{equation}\label{eq:dev_of_E_lm}
    \mathbb{E}_{\mathcal{L}^m}[V] = \sum_{v=1}^{m_m} v \!\!\sum_{K\in A^{m_m}_{m_m-v}}\!\! \left(\!\! 1- \sum_{i\in K} \mathbb{P}_{\mathcal{L}^m}(X = (x_1,...,x_{n^{l-1}})\!\!\right)^{n^l}
\end{equation}
where $m_m = |M^+_l|whn^{l-1}$. In the case of $\mathcal{L}^s$, like in the case of $\mathcal{L}^m$ we have $n^l$ samples, thus we replace $n = n^{l}$ and $p_i = \mathbb{P}_{\mathcal{L}^s}$ to obtain
\begin{equation}\label{eq:dev_of_E_ls}
    \mathbb{E}_{\mathcal{L}^s}[V] = \sum_{v=1}^{m_s} v \sum_{K\in A^{m_s}_{m_s-v}} \left(1- \sum_{i\in K} \mathbb{P}_{\mathcal{L}^s}(X = x_i)\right)^{n^{l}}
\end{equation}
where $m_s = |M^+_l|wh$.
However by equation \ref{eq:hypothesis_Lm_Ls}, it follows that
\begin{equation}
    \mathbb{P}_{\mathcal{L}^m}(X = x) = \prod_{j = 1}^{n^{l-1}} \mathbb{P}_{\mathcal{L}^s}(X_j = x_j) > 0
\end{equation}
and $\mathbb{P}_{\mathcal{L}^m}(x) < \mathbb{P}_{\mathcal{L}^s}(x')$ when $x'$ is a coordinate of $x$.
In consequence, if we compare equation \ref{eq:dev_of_E_lm} and \ref{eq:dev_of_E_ls}, we have a larger sum of larger terms.
Therefore, we get the desired result.
\end{proof}
\begin{figure}[t]
    \centering
    \includegraphics[width = \linewidth]{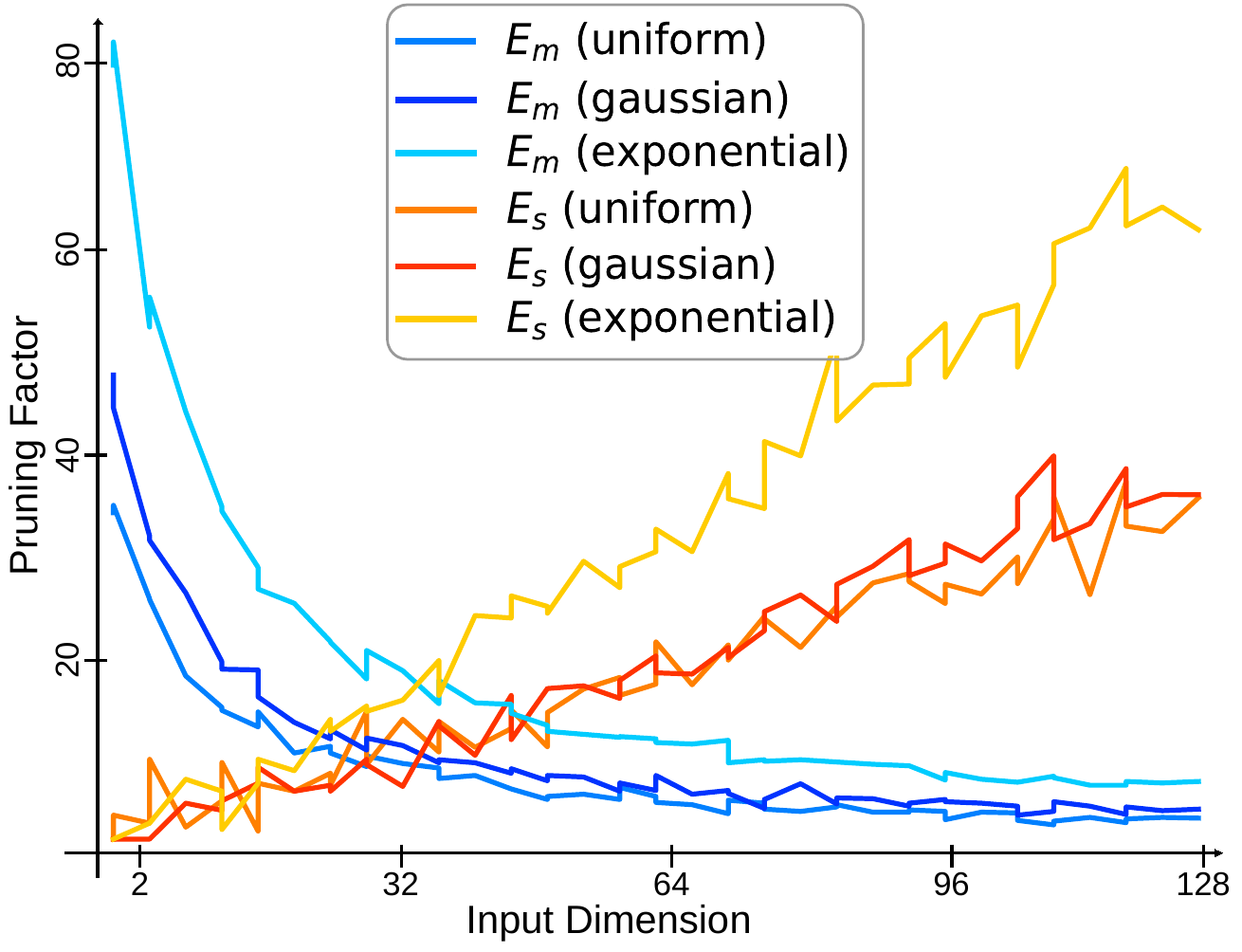}
    \caption{Given a layer $l$ with weights $W^l \in \mathbb{R}^{3\times3\times n^{l-1}\times n^l}$ we fix the value of the product $n^{l-1} \times n^l = 64^2$ and we plot the empirical values of the pruning ratios $\mathbb{E}_m$ (nuances of blue) and $\mathbb{E}_s$ (nuances of orange) for different values of $n^{l-1}$ the input dimension. The considered priors are the discreet Gaussian, the exponential distribution and the uniform distribution. We observe the complementarity of the merging and splitting.
    }
    \label{fig:Es_Em_all_priors}
\end{figure}
We propose in Fig \ref{fig:Es_Em_all_priors} an extension of this result for different configurations of $n^{l-1}$ and $n^l$. 
We tested the Gaussian, exponential and uniform priors, in the case of $3\times 3$ convolutional layer with weights $W\in \mathbb{R}^{3\times 3 \times n^{l-1}\times n^l}$ and $M^+ = 100$ constant. 
We vary the input dimension $n^{\text{in}}$ linearly from $2$ to $128$ with $n^l n^{l-1} = 64^2$ and plot $E_m$ and $E_s$ for each prior.
In particular, as stated in lemma \ref{thm:all_prior}, with $n^l = n^{l-1} = 64$, we have $\mathbb{E}_s > \mathbb{E}_m$ for all priors.
We also observe that splitting performs better for larger input dimensions while merge performs better for smaller input dimensions. 
This shows the complementary between these two steps.
The uniform prior appears to be the least favorable while the exponential is the most favorable. 
However a limit to the theory is the i.i.d. hypothesis which is probably not satisfied in practice. For this reason, we evaluate the expected pruning in order to evaluate the modelization as a birthday problem.
%
%                       -----   Expected Pruning (practice)    -----
%
\subsection{Birthday Problem and Empirical Distributions}\label{sec:bday_practice}
\begin{figure}[!t]
    \centering
    \includegraphics[width = 0.973\linewidth]{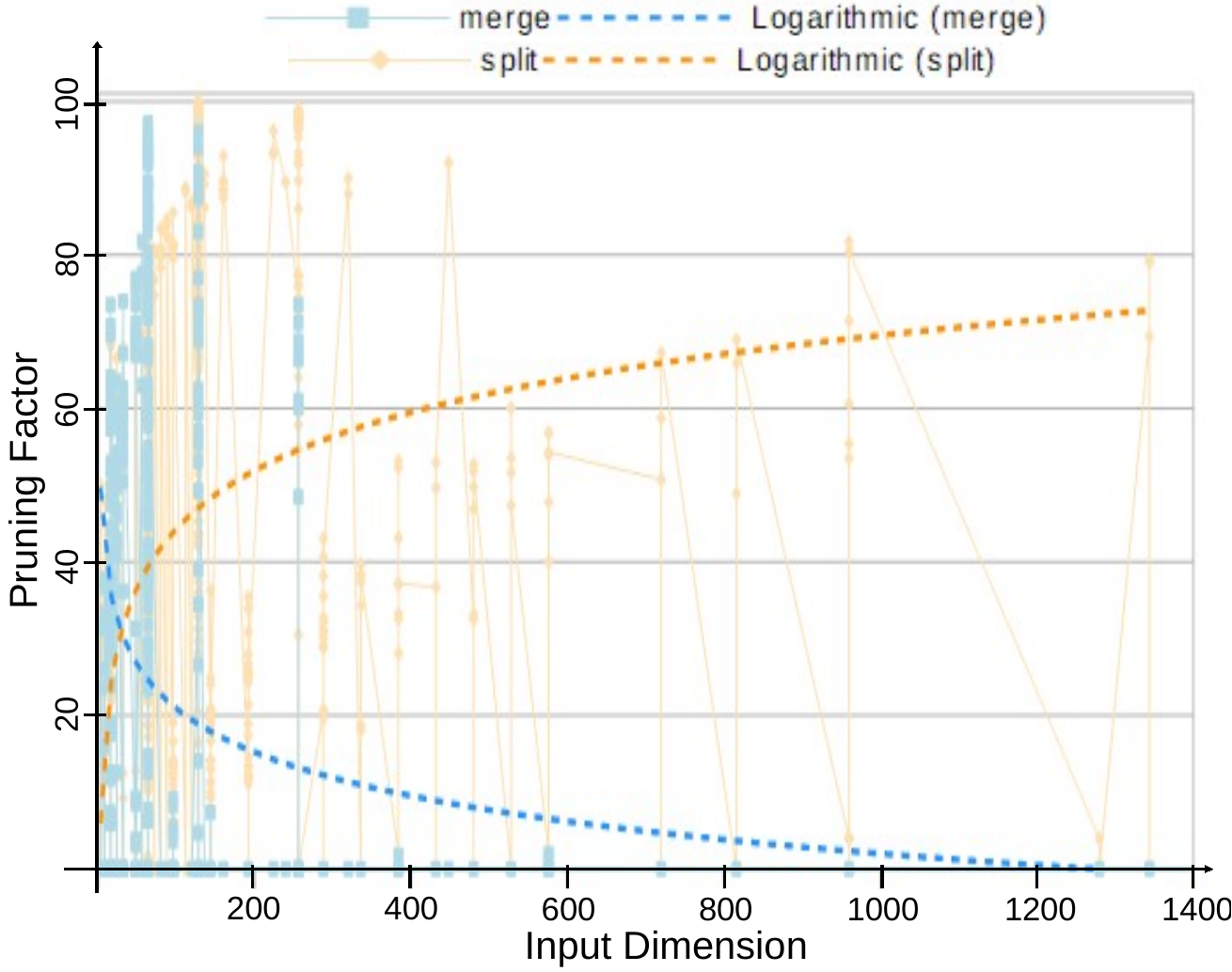}
    \caption{Empirical distribution of the pruning ratios from merging (blue) and splitting (orange) on several layers of different networks (e.g. ResNets and MobileNets) on different tasks, namely ImageNet and Cifar10. This corresponds to the \textit{in situ} results from Fig \ref{fig:Es_Em_all_priors}. Although very noisy, we observe similar trend lines as the theoretical expected pruning ratios.
    }
    \label{fig:bday_empirical}
\end{figure}
We study several architectures with a wide range of input shapes per layers. 
In Fig \ref{fig:bday_empirical}, we compute the empirical pruning ratio per input dimension similarly to Fig \ref{fig:Es_Em_all_priors} for a number of layers in various ResNets and MobileNets architectures on ImageNet and Cifar10.
The trend curve of pruning ratios from merge appears to match the theoretical results with a inverse proportionality relation with the input dimension.
In the case of split, on the contrary, the pruning ratio increases as a function of the input dimension which confirms the complementary with merging.
However, in detail we observe two trends with some highly pruned small layers which correspond to squeeze layers of the MobileNets and large layers that are not pruned which correspond to depthwise layers and prediction heads (not processed fully connected layers).
All in all, these results confirm the validity of the theoretical study of the expected pruning from \oursvir.
\newline
We empirically demonstrated that the lower the remaining number of distinct weights after hashing the higher the expected pruning from merging and splitting.
We also justified the complementarity of these steps for a given layer size by formulating it as a birthday problem.
We now experimentally validate the proposed \ours method quantitatively and qualitatively.

%%%%%%%%%%%%%%%%%%%%%%%%%%%%%%%%%%%%%%%%%%%%%%%%%%%%%%%%%%%%%%%%%%%%%%%%%%%%%%%%%%%%%
%
%
%                                    Experiments
%
%
%%%%%%%%%%%%%%%%%%%%%%%%%%%%%%%%%%%%%%%%%%%%%%%%%%%%%%%%%%%%%%%%%%%%%%%%%%%%%%%%%%%%%

\section{Experimental Validation of \ours}
\begin{table}[!t]
\renewcommand{\arraystretch}{1.15}
\caption{Each pruning step of \ours are applied in a sequence. The merge step with $\alpha = 0$ is always applied. In this ablation study we test the impact of each step.  We report for different models and tasks the accumulated pruning ratio of each one of them. The considered metric is the percentage of removed parameters. We note in bold the best performance and underline the second best.}
\label{tab:ablation}
\centering
\setlength\tabcolsep{4pt}
\begin{tabular}{|c|c|c|c|c|c|c|}
\hline
Hashing                     & \xmark & \xmark & \cmark & \cmark & \cmark & \cmark \\
merge ($\alpha$ value)      & 0      & $\alpha^*$ & 0      & $\alpha^*$ & 0      & $\alpha^*$ \\
split                       & \cmark & \cmark & \xmark & \xmark & \cmark & \cmark \\
\hline
\hline
\multicolumn{7}{|c|}{\cellcolor{Gray}Cifar10}\\
\hline
ResNet 20            & 0.00 & 18.58 & 25.18 & 41.03 & \underline{65.26} & \textbf{67.48} \\
ResNet 56            & 0.00 & 61.19 & 58.45 & 77.68 & \underline{85.89} & \textbf{88.81} \\
ResNet 110           & 0.00 & 75.29 & 62.41 & 84.43 & \underline{88.31} & \textbf{91.82} \\
ResNet 164           & 0.00 & 78.61 & 62.73 & 88.87 & \underline{90.87} & \textbf{94.49} \\ 
\hline
Wide ResNet 28-10   & 0.00 & 47.25 & 25.59 & 58.79 & \underline{77.49} & \textbf{80.13} \\
Wide ResNet 40-4    & 0.00 & 49.67 & 43.37 & 61.80 & \underline{65.97} & \textbf{68.59} \\
\hline
MobileNet V2 (0.35) & 0.00 & 4.99 & 0.00  & 6.47  & \underline{53.65} & \textbf{55.48} \\ 
MobileNet V2 (0.5)  & 0.00 & 2.97 & 0.00  & 3.87  & \underline{57.42} & \textbf{59.70} \\ 
MobileNet V2 (0.75) & 0.00 & 2.21 & 0.00  & 2.87  & \underline{63.14} & \textbf{65.65} \\ 
MobileNet V2 (1)    & 0.00 & 4.21 & 0.00  & 5.46  & \underline{69.17} & \textbf{71.52} \\ 
MobileNet V2 (1.4)  & 0.00 & 2.43 & 0.00  & 3.16  & \underline{77.92} & \textbf{80.57} \\ 
\hline
EfficientNetB0      & 0.00 & 2.02 & 1.44  & 2.62  & \underline{61.79} & \textbf{64.62} \\
EfficientNetB1      & 0.00 & 2.31 & 1.35  & 2.99  & \underline{66.12} & \textbf{68.75} \\
EfficientNetB2      & 0.00 & 4.53 & 1.25  & 5.86  & \underline{68.53} & \textbf{71.26} \\
EfficientNetB3      & 0.00 & 2.57 & 1.17  & 3.33  & \underline{70.28} & \textbf{72.67} \\
EfficientNetB4      & 0.00 & 3.81 & 1.04  & 4.93  & \underline{73.42} & \textbf{76.34} \\
EfficientNetB5      & 0.00 & 4.08 & 0.89  & 5.29  & \underline{77.66} & \textbf{80.30} \\
EfficientNetB6      & 0.00 & 3.88 & 0.81  & 5.03  & \underline{78.83} & \textbf{81.97} \\
EfficientNetB7      & 0.00 & 2.13 & 0.72  & 2.76  & \underline{80.23} & \textbf{83.42} \\
\hline
\hline
\multicolumn{7}{|c|}{\cellcolor{Gray}ImageNet}\\
\hline
ResNet 50           & 0.00 & 0.09 & 0.09 & 0.29 & \underline{43.95} & \textbf{44.25} \\
ResNet 101          & 0.00 & 0.75 & 0.58 & 0.75 & \underline{44.12} & \textbf{44.51} \\
ResNet 152          & 0.00 & 0.58 & 0.58 & 0.58 & \underline{43.64} & \textbf{43.68} \\
\hline
MobileNet V2 (0.35) & 0.00 & 2.75 & 0.00 & 2.75 & \underline{14.13} & \textbf{14.88} \\
MobileNet V2 (0.5)  & 0.00 & 1.40 & 0.00 & 1.40 & \underline{22.35} & \textbf{22.94} \\
MobileNet V2 (0.75) & 0.00 & 1.50 & 0.01 & 1.51 & \underline{33.27} & \textbf{35.75} \\
MobileNet V2 (1)    & 0.00 & 1.95 & 0.01 & 1.95 & \underline{46.00} & \textbf{46.97} \\
MobileNet V2 (1.4)  & 0.00 & 2.16 & 0.02 & 2.18 & \underline{83.73} & \textbf{85.42} \\
\hline
EfficientNetB0      & 0.00 & 1.68 & 1.23 & 1.91 & \underline{53.52} & \textbf{54.43} \\
EfficientNetB1      & 0.00 & 1.85 & 1.18 & 2.10 & \underline{62.68} & \textbf{63.17} \\
EfficientNetB2      & 0.00 & 1.39 & 1.10 & 1.58 & \underline{66.37} & \textbf{68.35} \\
EfficientNetB3      & 0.00 & 2.46 & 1.06 & 2.80 & \underline{69.79} & \textbf{71.41} \\
EfficientNetB4      & 0.00 & 2.43 & 0.98 & 2.76 & \underline{72.26} & \textbf{74.58} \\
EfficientNetB5      & 0.00 & 3.60 & 1.06 & 4.09 & \underline{80.64} & \textbf{81.41} \\
EfficientNetB6      & 0.00 & 1.57 & 1.70 & 1.79 & \underline{85.28} & \textbf{87.88} \\
EfficientNetB7      & 0.00 & 2.23 & 2.54 & 2.54 & \underline{87.50} & \textbf{89.21} \\
\hline
\end{tabular}
\end{table}
%
% \begin{figure}
%     \centering
%     \includegraphics[width = \linewidth]{per_layers_resnet.pdf}
%     \caption{Per layer pruning ratios of ResNet 20, 56, 110 and 164 on Cifar10. The lowest curve, in orange, is the result of merge ($\alpha = 0$) then we apply the relaxation merge ($\alpha = \alpha^*$), in green, and ultimately we perform split and obtain the highest pruning ratio in blue. split systematically outperforms merge ($\alpha = \alpha^*$) which by definition outperforms merge ($\alpha = 0$)}
%     \label{fig:per_layers_resnets}
% \end{figure}
% \begin{figure}
%     \centering
%     \includegraphics[width = \linewidth]{per_layers_mobilenet.pdf}
%     \caption{Per layer pruning ratios for different MobileNets (0.35, 0.5 and 1) as well as EffcientNet B0. We use the same color map as Fig \ref{fig:per_layers_resnets}. The orange curve is behind the green one when not visible. Note that the zero values of the blue curve correspond to depthwise convolutional layers which aren't modified by \oursvir. We find the same order of performance as in Fig \ref{fig:per_layers_resnets}.}
%     \label{fig:per_layers_mobilenets}
% \end{figure}
The proposed \ours method is composed by three steps: an adaptive data-free hashing, an output-wise merging step (with hyperparameter $\alpha$) and an input-wise splitting step (presented as algorithm in Appendix \ref{sec:appendix_algo}).

In order to validate our approach, we first perform an extensive ablation study to precisely assess the utility of each of these steps on popular deep convolutional architectures which are to this day the most popular computer vision model family. We also propose a qualitative analysis on these models to more precisely understand how the networks are pruned. We then show how our approach can be applied to different architectures that use fully-connected layers, such as transformers. Last but not least, we compare our method with state-of-the art methods, showing that it significantly outperforms other data-free methods and often rivals data-driven ones.

%We validate each step by testing different configurations defined by the use or not of some steps. 
%For merging, we have two options: either $\alpha = 0$ which corresponds to merging strictly redundant hashed operations and is data-free or $\alpha = \alpha^*$ the highest value of $\alpha$ such that the hashed accuracy is preserved.
%In consequence, in the following experiments each model has the same accuracy drops as the ones reported in TABLE \ref{tab:hashing_ratio} from the evaluation of hashing.
%
%
%
\subsection{Quantitative Analysis}\label{convnetablation}
First we perform ablation study of each individual component in \oursvir. Namely, for each experiment, we measure the pruning factor defined as the proportion of removed parameters from the original model (e.g. a pruning factor of $100\%$ indicates that the entire network was pruned). The results are presented in TABLE \ref{tab:ablation}, where each column corresponds to a different combination of \ours elementary blocks: for instance, the second column indicate no hashing, merge with $\alpha = 0$ and split while the fifth column indicates hashing, $\alpha = \alpha^*$ and no split.
As shown in the second column, merge with $\alpha = 0$ and no hashing leads to no pruning at all for every network: this shows that an approximation is required to introduce redundancies, whether it is weight value hashing or merging relaxation ($\alpha = \alpha ^*$). However, the third and fourth columns shows that just using either of these approximations results in very low pruning rates. In all cases, as shown in column five, using both hashing and merging relaxation ($\alpha = \alpha ^*$) leads to higher pruning ratios, most notably on ResNet (e.g. reaching $88.9$\% parameters removed on ResNet 164) and Wide ResNet networks on Cifar10. However, the pruning ratios are still very low on more compact networks (e.g. MobileNet and EfficientNet backbones) or ImageNet-trained DNNs.

%Finally, as discussed in our study of the expected pruning (see Section \ref{sec:generalized_bday}), the merging and splitting techniques are complementary and this is observed in the two last columns.

Adding the splitting step (sixth column) allows to achieve superior pruning rates on any network on both Cifar10 and ImageNet. Furthermore, using the relaxed merging step ($\alpha = \alpha^*$) along with the splitting step improves the results by $2$ to $4$ points on every network.
The proposed method achieves remarkable results across all networks: on the one hand, on already very small and compact networks designed for efficiency (e.g. MobileNet V2 network family or EfficientNet B0), the combination of hashing, merging and splitting allows to remove $53.65-77.92$\% on Cifar10 and $14.13-83.73$\% on ImageNet. Note that for a MobileNet network on ImageNet, nearly $80$\% of the network's parameters are contained in the last fully connected layer which is not considered as we only prune convolutional layers in this experiments, hence the lower baselines, notably with very low width multiplier (0.35-0.75).
On the other hand, on larger networks, e.g. on ResNets, Wide ResNets and the larger EfficientNets, \ours composed of hashing, merging and splitting typically removes $64.62-94.49$\% on Cifar10 and $43.68-89.21$\% of the parameters on ImageNet. This shows that \ours is very versatile and achieve superior pruning performance on many state-of-the-art networks, most notably without any drop in accuracy and without extensive hyperparameter setting, thanks to the complementarity between merging and splitting steps, the former working better on layers with large output dimension and small input dimensions, while the latter achieves better performance with larger input dimensions, as discussed in Section \ref{sec:generalized_bday}). 
We provide a qualitative analysis of the per-layer behaviour in Appendix \ref{sec:appendix_qualitative_analysis}.

%%%%%%%%%%%%%%%%%%%%%%%%%%%%%%%%%%%%%%%%%%%%%%%%%%%%%%%%%%%%%%%%%%%%%%%%%%%%%%%%%%%%%
%
%
%                    Image Transformers : split vs fully-connected layers
%
%
%%%%%%%%%%%%%%%%%%%%%%%%%%%%%%%%%%%%%%%%%%%%%%%%%%%%%%%%%%%%%%%%%%%%%%%%%%%%%%%%%%%%%
\subsection{split and Fully Connected Layers}\label{sec:appendix_fully_connected_layers}
\begin{table}[!t]
\renewcommand{\arraystretch}{1.15}
\caption{\ours evaluation on transformer architectures on ImageNet. These architecture embrace large fully connected layers. For each network, we report the total number of parameters, the hashing reduction rate (or hashing ratio), the accuracy drop, and the pruning ratios obtained by applying merging with $\alpha = 0$ as well as splitting. For all models, \ours achieves very high pruning ratios with no accuracy drop.}
\label{tab:fully_connected}
\centering
\begin{tabular}{|c|c|c|c|c|}
\hline
 model & $|W|$ & hashing ratio & acc drop & pruning ratio \\
\hline
\hline
DeiT T          & 5.7M  & 96.016\% & 0.000 & 70.32\% \\
DeiT S          & 22M   & 97.994\% & 0.100 & 84.93\% \\
DeiT            & 87M   & 98.527\% & 0.000 & 93.20\% \\
\hline
CaiT XS24       & 26.7M & 98.028\% & 0.000 & 81.85\% \\
CaiT S24        & 47M   & 98.992\% & 0.030 & 86.27\% \\
CaiT M36        & 271M  & 99.443\% & 0.100 & 91.80\% \\
\hline
LeViT 128S      & 7.9M  & 97.604\% & 0.000 & 76.91\% \\
LeViT 128       & 9.4M  & 97.774\% & 0.000 & 77.46\% \\
LeViT 192       & 11M   & 96.925\% & 0.000 & 80.49\% \\
LeViT 256       & 19M   & 97.133\% & 0.000 & 85.93\% \\
LeViT 384       & 39M   & 97.214\% & 0.000 & 90.55\% \\
\hline
\end{tabular}
\end{table}
In what precedes, we tested \ours on a wide range of convolutional architectures which are one ubiquitous model family for computer vision applications. Transformers are another example of such promising architectures, achieving state-of-the-art performance on ImageNet. 
For this reason, we validate \ours on today's most successful image transformer architectures, namely DeiT \cite{touvron2020training}, CaiT \cite{touvron2021going} and LeViT \cite{graham2021levit}.
For DeiT we considered the base model as well as the small (S) and Tiny (T) versions with input shape 224. For CaiT we considered Cait Extra Small (XS24), Small (S24) and Medium (M36) with depth 24, 24 and 36, respectively. Lastly, for Levit \cite{graham2021levit} we considered the five scale models, \textit{i.e.} 128S, 128, 192, 256 and 384.
We report our results in TABLE \ref{tab:fully_connected}. For all transformer networks, we measure the effectiveness of the hashing (hashing ratio) as well as the induced accuracy drop. Furthermore, we report the pruning ratio on each models with hashing, merging with $\alpha = 0$ and splitting. 
\newline\textbf{Hashing:} Similarly to convolutional networks, the image transformers are highly compressed \textit{via} the proposed adaptive hashing without witnessing any significant accuracy drop. As a matter of facts, the percentage of removed parameters is higher than that of convolutional networks with a comparable number of parameters,
e.g. in EfficientNet B0 hashing removes $72.3$\% distinct weight values while it removes $96$\% on DeiT T. This is also the case on larger networks with $97.8$\% on EfficentNet B7 and $98.5$\% on DeiT.
This is a consequence of the fact that more layers were ignored in convolutional networks (e.g. depthwise convolutional layers) and also the fact that Image Transformers usually have less layers but larger ones.
\newline\textbf{Pruning:} Generally speaking, \ours works very well on transformers, \textit{i.e.} from $70\%$ on the smaller models (e.g. DeiT T, LeViT 128S) to more than $90\%$ on larger ones (DeiT, CaiT M36, LeViT 384). Because transformers usually have fewer layers with larger input dimensions (see Section \ref{sec:generalized_bday}), the splitting step allows to remove large numbers of operations as compared with convolutional architectures: for instance on a models with $\approx 5$M parameters we jump from 55\% pruning ratio to 77\%. Also note that, echoing the results reported in Section \ref{convnetablation}, here again hashing appears as an essential step for introducing redundancies in DNNs as we obtained 0\% pruning ratios for all networks without it. 

\subsection{Comparison with State-Of-The-Art}\label{sec:experiments_SOTA}
%
%
%
% \begin{figure}[!th]
%     \centering
%     \includegraphics[width = \linewidth]{SOTA_ResNet50.pdf}
%     \caption{Comparison of \ours and SOTA structured methods for a ResNet 50 on ImageNet in terms of percentage of removed parameters (horizontal axis) and accuracy preservation (vertical axis). Data-free methods are in red while data-driven ones are in blue.}
%     \label{fig:SOTA_resnet}
% \end{figure}
\begin{figure*}[!th]
    \centering
    \includegraphics[width = \linewidth]{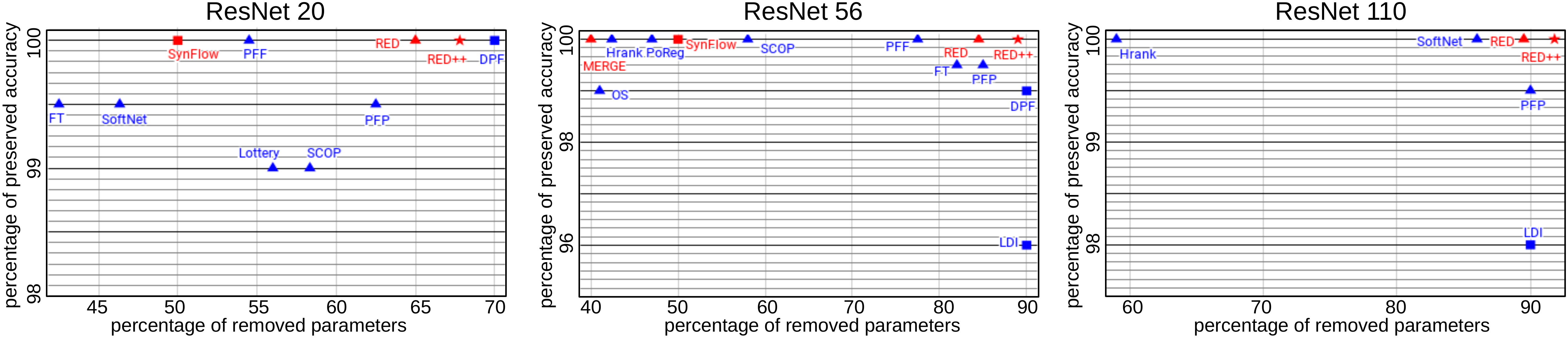}
    \caption{Comparison of \ours and SOTA methods for ResNet 20, 56 and 110 on Cifar10 in terms of percentage of removed parameters (horizontal axis) and accuracy preservation (vertical axis). Data-free methods are in red while data-driven ones are in blue. Structured methods are plotted as squares and unstructured ones as triangles.}
    \label{fig:SOTA_cifar}
\end{figure*}
\begin{figure*}[!th]
    \centering
    \includegraphics[width = \linewidth]{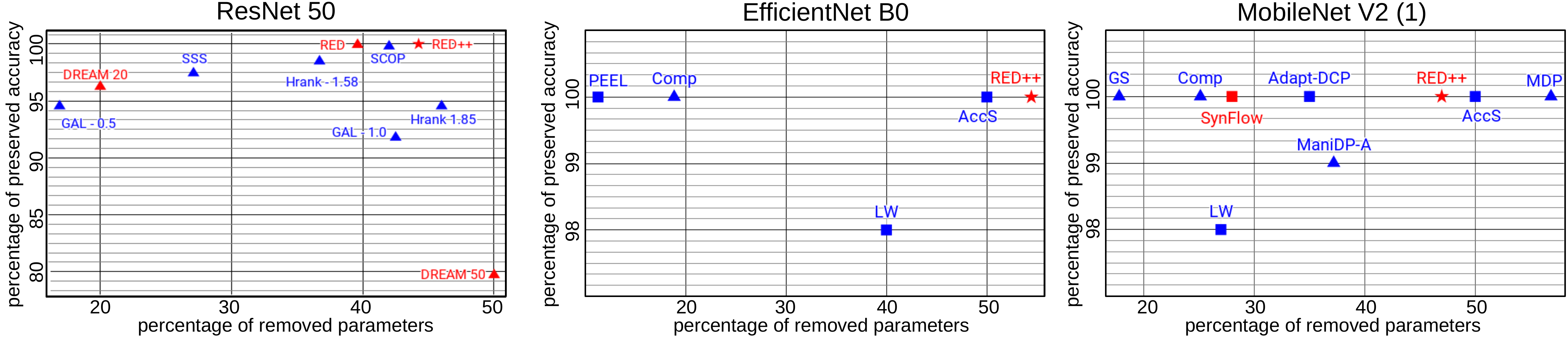}
    \caption{Comparison of \ours and SOTA methods for EfficientNet B0, ResNet 50 and MobileNet V2 on ImageNet in terms of percentage of removed parameters (horizontal axis) and accuracy preservation (vertical axis). Data-free methods are in red while data-driven ones are in blue. Structured methods are plotted as squares and unstructured ones as triangles.}
    \label{fig:SOTA_mob&eff}
\end{figure*}
In this section, we compare \ours with recent state-of-the-art DNN pruning methods. These methods can be referenced based on two criteria which strongly influence the pruning performance. First, the usage of data, characterized by the color (data-free in red and data-driven in blue). Data-driven methods, e.g. \cite{hou2021network,wang2020accelerate,horton2020layer,marban2020learning,mishra2021accelerating,bhalgat2020structured,ye2020good,guo2020multi,tang2021manifold,liu2021discrimination,zhuang2020neuron} usually vastly outperform data-free \cite{tanaka2020pruning, kim2020neuron, red2021} ones as they benefit from re-training the network to make up for the accuracy loss that may be caused by the pruning. The second classification method lies in the type of sparsity imposed to the resulting network. As studied in \cite{renda2020comparing}, unstructured pruning can easily achieve far better pruning ratios than structured pruning, which is usually more constrained. For this reason, we distinguish structured approaches, e.g. \cite{wang2020accelerate,bhalgat2020structured,ye2020good,guo2020multi,kim2020neuron, tang2021manifold, red2021} from unstructured ones, e.g. \cite{liu2021discrimination, hou2021network, horton2020layer, marban2020learning, mishra2021accelerating, tanaka2020pruning} using shapes (structured methods are highlighted with triangles and unstructured methods with a square).
\ours is data-free and structured, and, for the sake of clarity, is represented with red star. The results are shown on Figures \ref{fig:SOTA_cifar} and \ref{fig:SOTA_mob&eff} for several network architectures on which state-of-the-art methods also reported results.
\newline\textbf{Comparison on Cifar10: }
Fig. \ref{fig:SOTA_cifar} draws a comparison in terms of pruning performance for several state-of-the-art algorithms, measured as the trade-of between the proportion of removed parameters and proportion of the preserved accuracy from most popular models on Cifar10, i.e. ResNet 20, 56 and 110.
Generally speaking, \ours vastly outperforms all other data-free pruning techniques with our previous work RED being its closest contender.
Furthermore, \ours allows to greatly narrow the performance gap between data-free and data-driven pruning methods: on ResNet 20 and ResNet 56, it enables pruning rates similar to state-of-the-art DPF with no accuracy loss. On ResNet 110, \ours outperforms all other methods by $3$ points  on a benchmark with already high pruning rates ($>90$\%).
Excluding our previous work RED, \ours outperforms other data-free methods by $42$\% on ResNet 56 which is the most common benchmark, according to \cite{blalock2020state}.
\newline\textbf{Comparison on ImageNet: }
Fig. \ref{fig:SOTA_mob&eff} draws a comparison between \ours and other state-of-the-art pruning approaches on ImageNet. First, on ResNet 50 (Fig. \ref{fig:SOTA_mob&eff}-left plot) show \ours ability to preserve accuracy while achieving higher pruning ratios than any other structured methods. It outperforms SCOP \cite{tang2020scop} by $4$\% pruning rate and achieves $5.5$\% higher top1 accuracy than Hrank \cite{lin2020hrank} for a similar pruning rate, even though both methods are data-driven. When compared to data-free approaches, \ours outperforms RED \REDref by $4$\% pruning rate with no accuracy drop and significantly outperforms DREAM \cite{yin2020dreaming}, either by allowing superior pruning rates ($>20$\% pruning rate as compared with DREAM-20) or by retaining the original model accuracy ($>20$\% accuracy drop for DREAM-50) with a slightly lower pruning rate.
We also provide comparison on less common benchmarks such as MobileNet V2 and EfficientNets (Fig. \ref{fig:SOTA_mob&eff}). These networks are designed for efficiency and are more challenging.
Nonetheless, \ours outperforms all previous work by far, with an improvement of $4.4$\% upon AccS \cite{mishra2021accelerating} on EfficientNet B0 while preserving the accuracy.
On MobileNet V2 (with width multiplier 1), \ours doesn't prune the last layer which represents $35$\% of the network and still outperforms data-free methods such as SynFlow \cite{tanaka2020pruning} by $67.75$\%.
We also approach the  performance of recent data-driven methods such as MDP \cite{guo2020multi}. Note that if we prune the last layer, \ours achieves $66.12$\% pruning ratio with $5.43$\% accuracy drop.
\newline
Thus, we showed that \ours achieves high pruning rates on multiple datasets and networks, ranging from traditional convolutional architectures (e.g. ResNet family) to more compact and efficient designs (e.g. MobileNets and EfficientNets), as well as recently proposed image transformers. It outperforms other data-free and structured pruning methods by a significant margin, and narrows the gap with state-of-the-art data-driven and unstructured methods, despite being more constrained.

%These comparisons validate \ours as the most effective data-free pruning algorithm which narrows the gap with data-driven and unstructured pruning methods. 
%We discuss other metrics such as FLOPs and runtime acceleration in Appendix \ref{sec:appendix_flops}. 
%Finally, to the best of our knowledge, we are the first to evaluate our methods in Image Transformers which was discussed in Section \ref{sec:appendix_fully_connected_layers}.

%%%%%%%%%%%%%%%%%%%%%%%%%%%%%%%%%%%%%%%%%%%%%%%%%%%%%%%%%%%%%%%%%%%%%%%%%%%%%%%%%%%%%
%
%
%                                    Conclusion
%
%
%%%%%%%%%%%%%%%%%%%%%%%%%%%%%%%%%%%%%%%%%%%%%%%%%%%%%%%%%%%%%%%%%%%%%%%%%%%%%%%%%%%%%
\section{Discussion and conclusion}\label{sec:conclusion}
In this paper, we proposed a novel data-free DNN pruning method, called \oursvir. This method is composed of 3 steps: First, a layer-wise adaptive scalar weight hashing step which dramatically reduces the number of distinct weight values. Second, an output-wise redundant neuron merging. Third, an input-wise splitting strategy.

For the hashing step, we propose theoretical guarantees for DNN accuracy preservation, by bounding the induced error w.r.t. the original model. Furthermore, we propose a data-free criterion to assess the hashing behavior using only the DNN weights and batch normalization layer statistics. We experimentally validate this adaptive hashing step on multiple networks and datasets, showing its benefits as a standalone memory footprint solution as well as to highlight redundancies in DNNs, e.g. as compared as baseline solutions such as uniform weight quantization.

These induced redundancies can then be exploited in the frame of a novel pruning scheme, that includes an output-wise merging step as well as an input-wise splitting step. We formalized the pruning problem as a generalized birthday problem, which provides theoretical guarantees on the effectiveness of the proposed method. In particular, we derived expected values for the merging and splitting step pruning rates, showing the complementarity between these two steps. We experimentally validate this theoretical analysis, showing that merging and splitting are indeed complementary and that their association allows very high pruning rates.

Last but not least, we thoroughly evaluated our method on several datasets and network architectures, showing that \ours achieves very high pruning rates on traditional computer vision network families (e.g. ResNets), more compact architectures (e.g. MobileNets and EfficientNets), as well as recently proposed image transformers. As such, \ours significantly outperforms state-of-the-art data-free and structured pruning methods, and substantially narrows the gap with less constrained data-driven or unstructured methods on every benchmark. Furthermore, as detailed in Appendix \ref{sec:appendix_multi_dupli}, these high pruning rates generally translate well into FLOPs reduction. All in all, we believe that \ours will pave the way for innovation in DNN acceleration, and even the design of specific hardware and software solutions to facilitate not only matrix multiplication but also memory allocation speed, which is paramount to the proposed splitting method performance. As such, we provide an analysis of the splitting step performance in terms of direct runtime acceleration using existing hardware and software solution in Appendix \ref{sec:appendix_flops}. Our conclusion is that while input-wise splitting allows to greatly reduce the computational burden with standard hardware and software, more acceleration can be achieved with more dedicated solutions.

Future work involves using \ours in conjunction with other DNN compression techniques, by using the proposed hashing step along with existing data-free quantization techniques. Furthermore, other pruning methods, such as magnitude-based sparse pruning, could be used on top of \ours for higher compression rates. Ultimately, the data-free hypothesis can be relaxed, and networks with split layers could be re-trained for even more computationally efficient DNN architectures.

\begin{IEEEbiography}[{\includegraphics[width=1in,height=1.25in,clip,keepaspectratio]{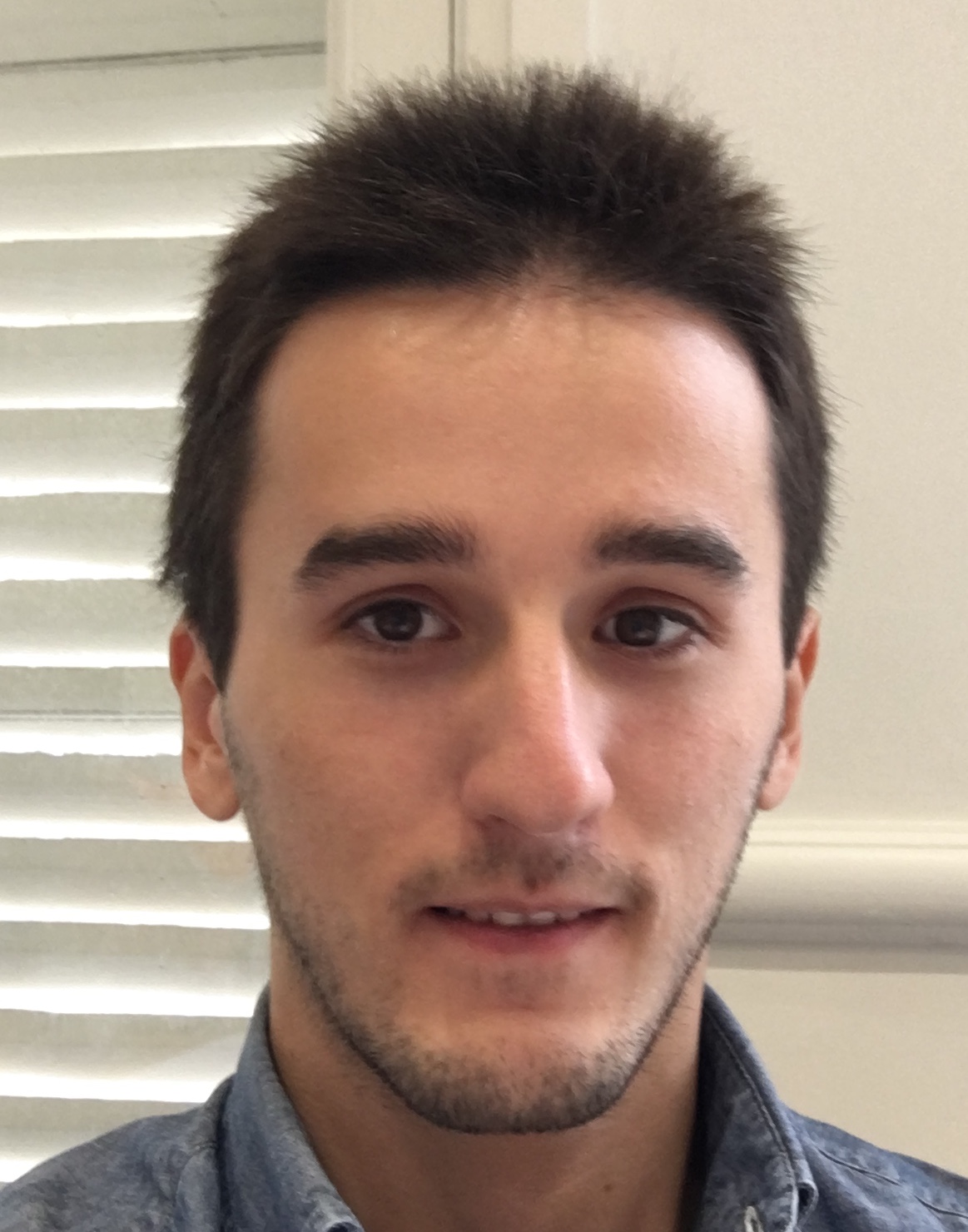}}]{Edouard YVINEC}
received his master's degree from Ecole Normale Superieure Paris-Saclay in 2020 and is currently a P.h.D. student at ISIR in Sorbonne Université. His research interest include but are not limited to DNN solutions for computer vision tasks, compression and acceleration of such models.
\end{IEEEbiography}
\begin{IEEEbiography}[{\includegraphics[width=1in,height=1.25in,clip,keepaspectratio]{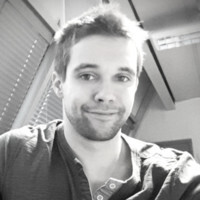}}]{Arnaud DAPOGNY} is a computer vision researcher at Datakalab in Paris. He obtained the Engineering degree from the Sup\'elec engineering School in 2011 and the Masters degree from Sorbonne University, Paris, in 2013 with high honors. He also obtained his PhD at Institute for Intelligent Systems and Robotics (ISIR) in 2016 and worked as a post-doctoral fellow at LIP6. His works concern deep learning for computer vision and its application to automatic facial behavior as well as gesture analysis.
\end{IEEEbiography}
\begin{IEEEbiography}[{\includegraphics[width=1in,height=1.25in,clip,keepaspectratio]{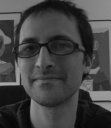}}]{Matthieu CORD}
 is full professor at Sorbonne University. He is also part-time principal scientist at Valeo.ai. His research expertise includes computer vision, machine learning and artificial intelligence. He is the author of more 150 publications on image classification, segmentation, deep learning, and multimodal vision and language understanding. He is an honorary member of the Institut Universitaire de France and served from 2015 to 2018 as an AI expert at CNRS and ANR (National Research Agency).
\end{IEEEbiography}
\begin{IEEEbiography}[{\includegraphics[width=1in,height=1.25in,clip,keepaspectratio]{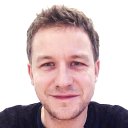}}]{Kevin BAILLY CORD}
is associate professor with the Institute of Intelligent Systems and Robotics (ISIR) at Sorbonne University and Head of Research of Datakalab. He received the PhD degree in computer science from the Pierre et Marie Curie University in 2010 and was a postdoctoral researcher at Telecom Paris from 2010 to 2011. His research interests are in machine learning and computer vision applied to face processing and behavior analysis.
\end{IEEEbiography}

{
% \small
\bibliographystyle{ieee_fullname}
\bibliography{egbib}
}
\newpage $ $
\newpage 

%%%%%%%%%%%%%%%%%%%%%%%%%%%%%%%%%%%%%%%%%%%%%%%%%%%%%%%%%%%%%%%%%%%%%%%%%%%%%%%%%%%%%
%
%
%                                    Appendices
%
%
%%%%%%%%%%%%%%%%%%%%%%%%%%%%%%%%%%%%%%%%%%%%%%%%%%%%%%%%%%%%%%%%%%%%%%%%%%%%%%%%%%%%%
\begin{appendices}
%%%%%%%%%%%%%%%%%%%%%%%%%%%%%%%%%%%%%%%%%%%%%%%%%%%%%%%%%%%%%%%%%%%%%%%%%%%%%%%%%%%%%
%
%
%                    Appendix : Details on U
%
%
%%%%%%%%%%%%%%%%%%%%%%%%%%%%%%%%%%%%%%%%%%%%%%%%%%%%%%%%%%%%%%%%%%%%%%%%%%%%%%%%%%%%%
\section{Details on the Upper Bound $U$}\label{sec:appendix_U_details}
We detail the computation of equations \ref{eq:hash_per_layer} and \ref{eq:hashing_2}.
A convolution is defined by the following operation
\begin{equation}
\text{Output}_{i,j,l} = \sum_{\delta i, \delta j, k} \text{Input}_{i+\delta i, j + \delta j, k} W_{i+\delta i, j + \delta j, k, l}
\end{equation}
This gives us $n^{l-1}w^lh^l$ multiplications per output. 
For each of these operations we have the upper bound $u_l$. 
Therefore, the errors are sampled in $[-u_l; u_l]$.
The Central limit theorem gives us that the average error converges to a standard Gaussian distribution $\mathcal{N}(0,1)$ and we get that
\begin{equation}
\mathbbm{E}[\|\tilde f^l - f^l\|] \leq \frac{u_l}{\sqrt{n^{l-1}w^lh^l}}
\end{equation}
assuming no activation function on layer $l$.
Now let's assume we have $L =2$, \text{i.e.} the DNN $f : x \mapsto f^2(f^1(x))$ follows
\begin{equation}
\mathbbm{E}_X[\| \tilde f - f \|] = \mathbbm{E}_X[\| \tilde f^2(\tilde f^1(X)) - f^2(f^1(X)) \|]
\end{equation}
where $X$ is the random variable defined by the inputs.
Because $f$ is piece-wise affine, we can assert that 
\begin{equation}
\begin{aligned}
\mathbbm{E}_X[\| \tilde f - f \|] \simeq& \mathbbm{E}_X[\| \tilde f^2(f^1(X)) + \tilde f^2(\|\tilde f^1(X) - f^1(x)\|)\\ & - f^2(f^1(X)) \|
]
\end{aligned}
\end{equation}
Now using the triangular inequality, we get
\begin{equation}
\mathbbm{E}_X[\| \tilde f - f \|] \simeq \mathbbm{E}_{f^1(X)}[\|\tilde f^2 - f^2\| + \mathbbm{E}_{\|\tilde f^1(X) - f^1(x)\|}[\tilde f^2]
\end{equation}
We deduce
\begin{equation}
\mathbbm{E}_X[\| \tilde f - f \|] \leq \mu^1\mathbbm{E}[\|\tilde f^2 - f^2\|] + \mu^2\mathbbm{E}[\|\tilde f^1 - f^1\|]
\end{equation}
and it follows equation \ref{eq:hashing_2}.

%%%%%%%%%%%%%%%%%%%%%%%%%%%%%%%%%%%%%%%%%%%%%%%%%%%%%%%%%%%%%%%%%%%%%%%%%%%%%%%%%%%%%
%
%
%                    Appendix : Datasets and Base Models
%
%
%%%%%%%%%%%%%%%%%%%%%%%%%%%%%%%%%%%%%%%%%%%%%%%%%%%%%%%%%%%%%%%%%%%%%%%%%%%%%%%%%%%%%
\section{Datasets and Base Models}\label{sec:appendix_datasets}
We evaluate our models on the two \textit{de facto} standard datasets for pruning, \textit{\textit{i.e.}} CIFAR-10 \cite{krizhevsky2009learning} (50k train/10k test) and ImageNet \cite{ImageNet_cvpr09} ($\approx 1.2$M images train/50k test).
In order to compare our protocol to the SOTA we considered the most frequently used networks among which ResNet \cite{he2016deep} (ResNet 20 and 56 with respective number of parameters 270k, 852k and accuracies $92.48$, $93.46$ on CIFAR-10 and ResNet 50 with 25M parameters and $76.17$ top-1 accuracy on ImageNet) as well as Wide-ResNet \cite{zagoruyko2016wide}.
Wide ResNet architectures are defined by their number of layers as well as their wideness multiplier: we evaluate on Wide ResNet 28-10 with 36.5M parameters and $95.8$ accuracy and Wide ResNet 40-4 with 8.9M parameters and $95.0$ accuracy.\newline
We also tested our method on networks presenting different types of convolutional layers layer such as MobileNet V2 \cite{sandler2018mobilenetv2} and EfficientNet \cite{tan2019efficientnet} (depthwise-separable convolutions). MobileNets are tested with different width-multiplier from MobileNet V2 (0.35) having $1.66$M parameters and $60.3$ top1 accuracy on ImageNet to MobileNet V2 (1.4) having $6.06$M parameters and $75.0$ top1 accuracy. EfficientNet is tested for different scales from B0 with $5.3$M parameters and $77.2$ top1 accuracy to B7 with $66.7$M (B7) parameters and $84.4$ accuracy.\newline 
Furthermore we tested the method on networks using large fully connected layers such as Image Transformers \cite{touvron2020training, touvron2021going,graham2021levit}.

%%%%%%%%%%%%%%%%%%%%%%%%%%%%%%%%%%%%%%%%%%%%%%%%%%%%%%%%%%%%%%%%%%%%%%%%%%%%%%%%%%%%%
%
%
%                    Appendix : Implementation details
%
%
%%%%%%%%%%%%%%%%%%%%%%%%%%%%%%%%%%%%%%%%%%%%%%%%%%%%%%%%%%%%%%%%%%%%%%%%%%%%%%%%%%%%%
\section{Implementation Details}\label{sec:appendix_implem_details}
The hashing step was implemented following the guidelines provided in \REDref, using Scikit-learn python library, with bandwidth $\Delta_l$ set as the median of the differences between consecutive weight values per layer $l$ in the adaptive hashing algorithm. 
The merging step is implemented using Numpy. Contrary to the original method, we set the hyperparameter $\alpha$ to its optimal value $\alpha^*$ (\textit{i.e.} the highest $\alpha$ which preserves the accuracy of the model) unless stated otherwise. We apply a per block strategy for setting the layer-wise $(\alpha^l)$ detailed in Appendix \ref{sec:appendix_alpha}. 
The splitting step was also implemented in Numpy python library.\newline
We ran our experiments on a Intel(R) Core(TM) i7-7820X CPU. Similarly to \REDref, the hashing step is the bottleneck and its processing time depends on the model's size: ranging from a few seconds for smaller models suited for CIFAR-10 and up to a day for a wider networks designed for ImageNet. 
For a given layer $l$, the complexity of the proposed hashing is $\mathcal{O}(|W^l||S|)$ where $|W^l|$ denotes the number of weights and $|S|$ the sampling size (used for the evaluation of the kde and corresponds to $\omega$ in equation 1). The hashing can be accelerated (e.g. up to 50 times faster on a ResNet 50) by processing the layers in parallel. Furthermore, for very large layers (e.g. layers with over $|W^l| = 10^6$ parameters) we can simply take a fraction (e.g. $5.10^4$ values in $W^l$) of the weights randomly to get identical density estimators much faster. Following the example, we would go $\frac{10^6}{5.10^4} = 20$ times faster, due to the linear complexity. As such, we were able to process ResNet 101 and 152 with $99.9\%$ compression and accuracy drop, in about 2 and 4.5 hours respectively.
The pruning steps (merging and split) only require a few minutes in the worst case scenario, never exceeding half an hour.

\section{Merging Hyperparameter Selection Strategy}\label{sec:appendix_alpha}
The merging parameter $\alpha$ defines the average proportion of neurons to merge per layer. The merging step is performed layer per layer and we study four candidate strategies to derive $\alpha^l$ values from $\alpha$. The selected strategy, called \textit{per block strategy}, is defined as follows
\begin{equation}
\begin{cases}
    \alpha^l = \max\{2\alpha -1,0\} & \text{if } l \in \llbracket 0 ; L/3 \llbracket \\
    \alpha^l = \alpha & \text{if } l \in \llbracket L/3 ; 2L/3 \rrbracket \\
    \alpha^l = \min\{2\alpha,1\} & \text{if } l \in \rrbracket 2L/3 ; L \rrbracket \\
\end{cases}
\end{equation}
The other strategies are
\begin{itemize}
    \item \textit{constant} strategy: $\forall l \in \llbracket 1 ; L \rrbracket$, $\alpha^l$
    \item \textit{linear ascending} strategy: $\alpha^l$ $\forall l \in \llbracket 1 ; L \rrbracket$, $\alpha^l = \alpha l / L$
    \item \textit{linear descending} strategy: $\forall l \in \llbracket 1 ; L \rrbracket$, $\alpha^l = \alpha (L-l) / L$
\end{itemize}
% We added diversity to the architectures used for the strategy validation from RED, in TABLE \ref{tab:strategies}.
The conclusion remains unchanged and the per-block strategy is used in all our other benchmarks.

In the main paper we considered two possible values for $\alpha$.
The first corresponds to the prior-less situation where we want to maximize the accuracy of the model, \textit{i.e.} we only merge strictly identical hashed neurons, \textit{i.e.} $\alpha = 0$.
The second serves only the purpose of comparison as its not data-free and corresponds to maximum pruning without accuracy drop from hashing, that we noted $\alpha = \alpha^*$.
Note that it only requires validation and no training to be leveraged.
As stated in the main paper the parameter $\alpha$ can be set at different values for different pruning/accuracy trade-offs.
Note that as a general rule of thumb, $\alpha$ can be set to relatively high values (e.g. $20\%$ to 50\%) on easier tasks (e.g. Cifar10) but should remain low (lower than $5\%$) on harder tasks (e.g. ImageNet).

\begin{table}[!t]
\centering
\renewcommand{\arraystretch}{1.15}
\setlength\tabcolsep{2.75pt}
\caption{Comparison between different strategies for $\alpha^l$ in terms of pruning factor for several networks on Cifar10. We use the same hashing for all methods and report the pruning factor for merging $\alpha = \alpha^*$.}
\begin{tabular}{l|c|c|c}
\hline
Strategy & ResNet 56 & MobileNet V2 (1) & Wide ResNet 40-4 \\
\hline\hline
linear descending & 71.67 & 1.19 & 57.22 \\
\hline
constant  & 72.40 & 0.87 & 58.78 \\
\hline
linear ascending & 73.93 & 3.01 & 60.48 \\
\hline
block  & \textbf{77.68} & \textbf{5.46} & \textbf{61.80} \\
\hline
\end{tabular}
\label{tab:strategies}
\end{table}
%%%%%%%%%%%%%%%%%%%%%%%%%%%%%%%%%%%%%%%%%%%%%%%%%%%%%%%%%%%%%%%%%%%%%%%%%%%%%%%%%%%%%
%
%
%                    Appendix : Multiplication vs Duplication
%
%
%%%%%%%%%%%%%%%%%%%%%%%%%%%%%%%%%%%%%%%%%%%%%%%%%%%%%%%%%%%%%%%%%%%%%%%%%%%%%%%%%%%%%
\section{split's assumption}
\subsection{Behavior on a Single Layer}\label{sec:appendix_multi_dupli}
The main assumption made in this article is hardware-based. 
In order for split to be efficient we need the following condition to be satisfied : "memory access and allocation should run faster that mathematical operations".
According to our research the validity of this condition is very dependent on the material and operation implementations.
The pruning performed corresponds to block sparsity as it removes blocks of computations but not according to regular structures used in modern implementations.
The current paradigm consists in optimizing matrix multiplications \cite{chetlur2014cudnn}.
And this affects the research on pruning which tend to focus on methods that leverage current hardware properties.
In this search we deviate from this both virtuous and vicious circle.
Nonetheless we empirically motivate \ours perspective by comparing it to regular convolutions with equivalent implementations in order to display the inference acceleration.
To do so we run a $3\times 3$ convolution with $5$ output channels on a random noise of shape $244 \times 224 \times 3$.
This convolution is pruned structurally for reference and pruned using split for comparison.
This test is highly influenced by the batch-size as well as the image size. 
We considered batches of size $10$.
The resulting plot are displayed in Fig \ref{fig:assumption}. 
We observe that split achieves lower yet decent acceleration with a similar implementation which should motivate the investigation on hardware and inference engines for such methods.
\begin{figure}[ht]
    \centering
    \includegraphics[width = 0.85\linewidth]{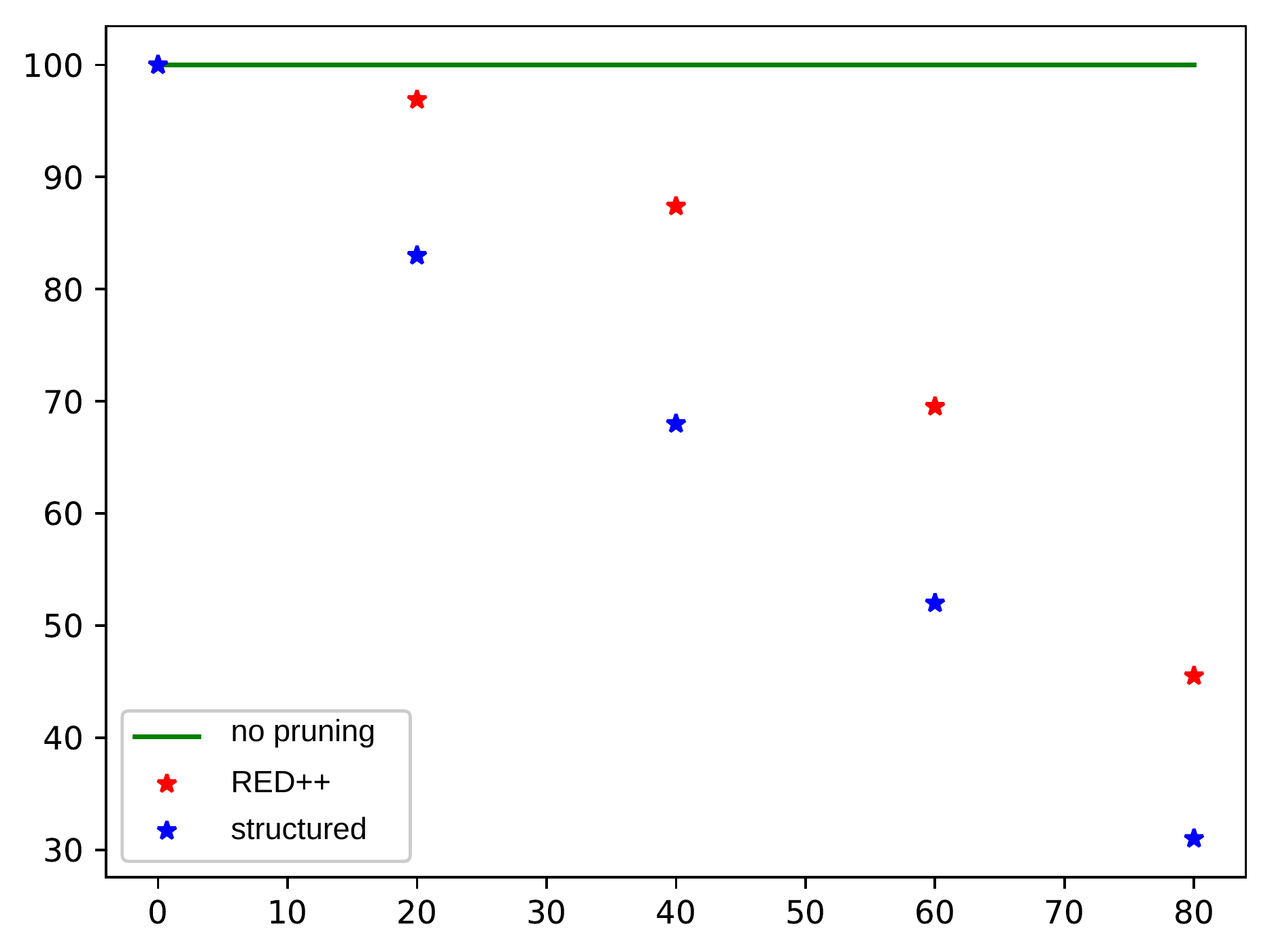}
    \caption{Plot of the percentage from the original runtime of \ours vs structured pruning on a CPU and batch of $10$ images of shape $224\times 224\times 3$.}
    \label{fig:assumption}
\end{figure}
%%%%%%%%%%%%%%%%%%%%%%%%%%%%%%%%%%%%%%%%%%%%%%%%%%%%%%%%%%%%%%%%%%%%%%%%%%%%%%%%%%%%%
%
%
%                    Appendix : Other Metrics
%
%
%%%%%%%%%%%%%%%%%%%%%%%%%%%%%%%%%%%%%%%%%%%%%%%%%%%%%%%%%%%%%%%%%%%%%%%%%%%%%%%%%%%%%
\subsection{Behavior on the Full Model - FLOPs and Runtime}\label{sec:appendix_flops}
\begin{table}[th]
\renewcommand{\arraystretch}{1.15}
\caption{Comparison between the two most frequently used metrics in DNN pruning, the percentage of removed parameters and the percentage of removed FLOPs. The number of parameters removed are the same as the ones listed in Table \ref{tab:ablation}.}
\label{tab:FLOPs}
\centering
\begin{tabular}{|c|c|c|c|}
\hline
Dataset & Model & Params & FLOPs \\
\hline
\hline
\multirow{19}{*}{Cifar10} & 
 ResNet 20            & 67.48 & 65.57 \\
& ResNet 56           & 88.81 & 88.01 \\
& ResNet 110          & 91.82 & 91.14 \\
& ResNet 164          & 94.49 & 94.21 \\ 
\cline{2-4}
& Wide ResNet 28-10   & 80.13 & 79.24 \\
& Wide ResNet 40-4    & 68.59 & 68.72 \\
\cline{2-4}
& MobileNet V2 (0.35) & 55.48 & 54.94 \\ 
& MobileNet V2 (0.5)  & 59.70 & 60.15 \\ 
& MobileNet V2 (0.75) & 65.65 & 65.11 \\ 
& MobileNet V2 (1)    & 71.52 & 71.81 \\ 
& MobileNet V2 (1.4)  & 80.57 & 79.64 \\ 
\cline{2-4}
& EfficientNetB0      & 64.62 & 63.98 \\
& EfficientNetB1      & 68.75 & 67.51 \\
& EfficientNetB2      & 71.26 & 70.07 \\
& EfficientNetB3      & 72.67 & 71.75 \\
& EfficientNetB4      & 76.34 & 76.22 \\
& EfficientNetB5      & 80.30 & 80.29 \\
& EfficientNetB6      & 81.97 & 81.43 \\
& EfficientNetB7      & 83.42 & 83.61\\
\hline
\hline
\multirow{27}{*}{ImageNet} &
  ResNet 50           & 44.25 & 43.70 \\
& ResNet 101          & 44.51 & 43.85 \\
& ResNet 152          & 43.68 & 43.46 \\
\cline{2-4}
& MobileNet V2 (0.35) & 14.88 & 15.06 \\
& MobileNet V2 (0.5)  & 22.94 & 22.99 \\
& MobileNet V2 (0.75) & 35.75 & 34.56 \\
& MobileNet V2 (1)    & 46.97 & 46.00 \\
& MobileNet V2 (1.4)  & 85.42 & 85.34 \\
\cline{2-4}
& EfficientNetB0      & 54.43 & 54.32 \\
& EfficientNetB1      & 63.17 & 63.12 \\
& EfficientNetB2      & 68.35 & 68.17 \\
& EfficientNetB3      & 71.41 & 70.66 \\
& EfficientNetB4      & 74.58 & 72.75 \\
& EfficientNetB5      & 81.41 & 79.84 \\
& EfficientNetB6      & 87.88 & 86.96 \\
& EfficientNetB7      & 89.21 & 87.54 \\
\cline{2-4}
& DeiT T              & 70.32 & 68.97 \\
& DeiT S              & 84.93 & 82.84 \\
& DeiT                & 93.20 & 91.82 \\
\cline{2-4}
& CaiT XS24           & 81.85 & 80.65 \\
& CaiT S24            & 86.27 & 84.34 \\
& CaiT M36            & 91.80 & 89.84 \\
\cline{2-4}
& LeViT 128S          & 76.91 & 75.01 \\
& LeViT 128           & 77.46 & 75.85 \\
& LeViT 192           & 80.49 & 78.99 \\
& LeViT 256           & 85.93 & 84.37 \\
& LeViT 384           & 90.55 & 89.26 \\
\hline
\end{tabular}
\end{table}
An important question in evaluation is the choice of metric. 
In the case of DNN pruning it appears to be a particularly discussed matter as suggested in \cite{blalock2020state}.
The percentage of removed parameters seems to be the most common metric and the one we used in the main paper.
\newline\textbf{FLOPS: }Nonetheless, the percentage of removed FLOPs is an other commonly used metric which corresponds to the number of floating point operations to run.
In TABLE \ref{tab:FLOPs} we report our results on the two metrics to show how closely related these two measurements are in the case of \oursvir.
This similarity in the two metrics is often observed in DNN pruning but is not systematic.
\newline\textbf{Inference Time: }Considering that the goal of pruning is to speed-up inference-time an intuitive evaluation protocol for such methods would be to measure the speed-up in practice.
However this is almost never reported as it depends too much on the hardware setup (e.g. which device, CPU/GPU, batch size,...). 
Papers that uses such evaluations usually propose novel DNN architectures \cite{graham2021levit} or inference engine such as TensorRT \cite{vanholder2016efficient} or OpenVINO \cite{gorbachev2019openvino}.
Nonetheless we still made a case study of split runtime performance in Appendix \ref{sec:appendix_multi_dupli}.

\section{Algorithms}\label{sec:appendix_algo}
For clarity we summarize \ours in Algorithm \ref{alg:ours}. The only commutative steps are the merging step and split. 
\begin{algorithm}[ht]
  \caption{\ours method}
  \label{alg:ours}
\begin{algorithmic}
  \STATE {\bfseries Input:} trained DNN $f$ with weights $(W^l)_{l \in \llbracket 1 ; L \rrbracket}$ and $\alpha$
  \STATE $\tilde f\leftarrow$ Hashing\_step ($f$)\COMMENT{\textcolor{OliveGreen}{Algorithm \ref{alg:hashing}}}
  \STATE $\bar f\leftarrow$ Merging\_step ($\tilde f$, $\alpha$)\COMMENT{\textcolor{OliveGreen}{Algorithm \ref{alg:merging}}}
  \STATE $\hat f\leftarrow$ Splitting\_step($\bar f$)\COMMENT{\textcolor{OliveGreen}{Algorithm \ref{alg:split}}}
  \STATE {\bfseries return} $\hat f$ 
\end{algorithmic}
\end{algorithm}\newline
We apply the first two steps of RED which we recall in Section \ref{sec:preliminaries} as well as in Algorithm \ref{alg:hashing} and \ref{alg:merging}. To further prune the DNN we apply split (Algorithm \ref{alg:split}) which is the data-free structured pruning mechanism we detail in Section \ref{sec:preliminaries_method}. It consists in splitting each layer independently, by input to replace redundant sub-neurons computations by memory duplications.
\begin{algorithm}[!ht]
  \caption{Hashing\_step}
  \label{alg:hashing}
\begin{algorithmic}
  \STATE {\bfseries Input:} trained DNN $f$ with weights $(W^l)_{l \in \llbracket 1 ; L \rrbracket}$, hyperparameters $(\tau^l)_{l \in \llbracket 1 ; L \rrbracket}$
  \STATE Initialize $\tilde f = f$ 
  \FOR{$l=1$ {\bfseries to} $L$}
  \STATE $d^l = \text{KDE}(W^l)$
    \STATE extract ${(m_k^l)}_{k \in M^-}$ and ${(M_k^l)}_{k \in M^+}$ from $d^l$
    \STATE ${(M_k^l)}_{k \in M^+}\leftarrow$ NMS $\left({(M_k^l)}_{k \in M^+}, \tau^l\right)$
    \FOR{$w \in W^l$}
    \STATE find $k$ such that $w \in [m_k^l; m_{k+1}^l[$
    \STATE $\tilde w \leftarrow M_k^l$
  \ENDFOR
  \ENDFOR
  \STATE return $\tilde f$ 
\end{algorithmic}
\end{algorithm}
\begin{algorithm}[ht]
  \caption{Merging\_step}
  \label{alg:merging}
\begin{algorithmic}
  \STATE {\bfseries Input:} hashed DNN $\tilde f$, hyperparameters $(\alpha^l)_{l \in \llbracket 1 ; L \rrbracket}$
  \STATE Initialize $\bar f = \tilde f$ with $(\bar W^l)_{l \in \llbracket 1 ; L \rrbracket} \leftarrow (\tilde W^l)_{l \in \llbracket 1 ; L \rrbracket}$
    \FOR{$l=1$ {\bfseries to} $L-1$}
    \STATE $D \leftarrow$ matrix of $l^2$ distances between all neurons
    \STATE $d \leftarrow \alpha^l$ percentile of $D$ \hspace*{\fill}\textcolor{OliveGreen}{$\blacktriangleright$ $d$ is the threshold distance}
    \STATE $D_{i,j} \leftarrow$ $1_{D_{i,j} \geq d \text{ or } i =j}$ \hspace*{\fill} \textcolor{OliveGreen}{$\blacktriangleright$ graph of similarities $D$}
    \STATE $M \leftarrow$ connected components from $D$
  \STATE $\bar W^{l}_{\text{new}} = []$
    \FOR{comp $\in M$}
    \STATE $\bar W^{l}_{\text{new}}\text{.append}\!\!\left(\!\frac{1}{\left|\text{comp}\right|}\!\! \sum_{j\in\text{comp}} \!\bar W^{l}_{[...,j]}\right)$ \hspace*{\fill} \textcolor{OliveGreen}{$\blacktriangleright$ barycenters} 
  \ENDFOR
  \STATE $\bar W^l \leftarrow \bar W^{l}_{\text{new}}$
  \STATE $\bar W^{l+1}_{\text{new}} = []$ \hspace*{\fill} \textcolor{OliveGreen}{$\blacktriangleright$ layer $l+1$ update}
  \FOR{comp $\in M$}
  \STATE $\bar W^{l+1}_{\text{new}}\text{.append}\left(\sum_{i \in \text{comp}} \bar W^{l+1}_{[i,...]}\right)$
  \ENDFOR
  \STATE $\bar W^{l+1} \leftarrow \bar W^{l+1}_{\text{new}}$
  \ENDFOR
  \STATE return $\bar f$
\end{algorithmic}
\end{algorithm}
\begin{algorithm}[ht]
  \caption{splitting\_step}
  \label{alg:split}
\begin{algorithmic}
  \STATE {\bfseries Input:} pre processed DNN $\bar f$ with weights $(\bar W^l)_{l \in \llbracket 1 ; L \rrbracket}$
  \STATE $\hat f \leftarrow \bar f$
  \FOR{$l \in \llbracket 1 ; L \rrbracket$}
  \STATE new\_kernel = [], duplications = {}
  \STATE key $\leftarrow$ weights ID generator
    \FOR{$i \in \llbracket 1; n^{l-1} \rrbracket$}
        \STATE $\text{kernel}_i$ = []
        \FOR{$j \in \llbracket 1; n^{l} \rrbracket$}
        \IF{$W^l_{[:,:,i,j]} \in \text{kernel}_i$}
            \STATE duplications[key($W^l_{[:,:,i,j]}$)].append(j)
        \ELSE
            \STATE $\text{kernel}_i$.append($W^l_{[:,:,i,j]}$)
            \STATE duplications[key($W^l_{[:,:,i,j]}$)] = [j]
        \ENDIF
        \ENDFOR
      new\_kernel.append($\text{kernel}_i$)
    \ENDFOR
    \STATE $\hat f^l \leftarrow$ split\_layer(new\_kernel, duplications)
    \ENDFOR
    \STATE {\bfseries return} $\hat f$
\end{algorithmic}
\end{algorithm}\newline
\section{Qualitative Analysis}\label{sec:appendix_qualitative_analysis}
\begin{figure}
    \centering
    \includegraphics[width = \linewidth]{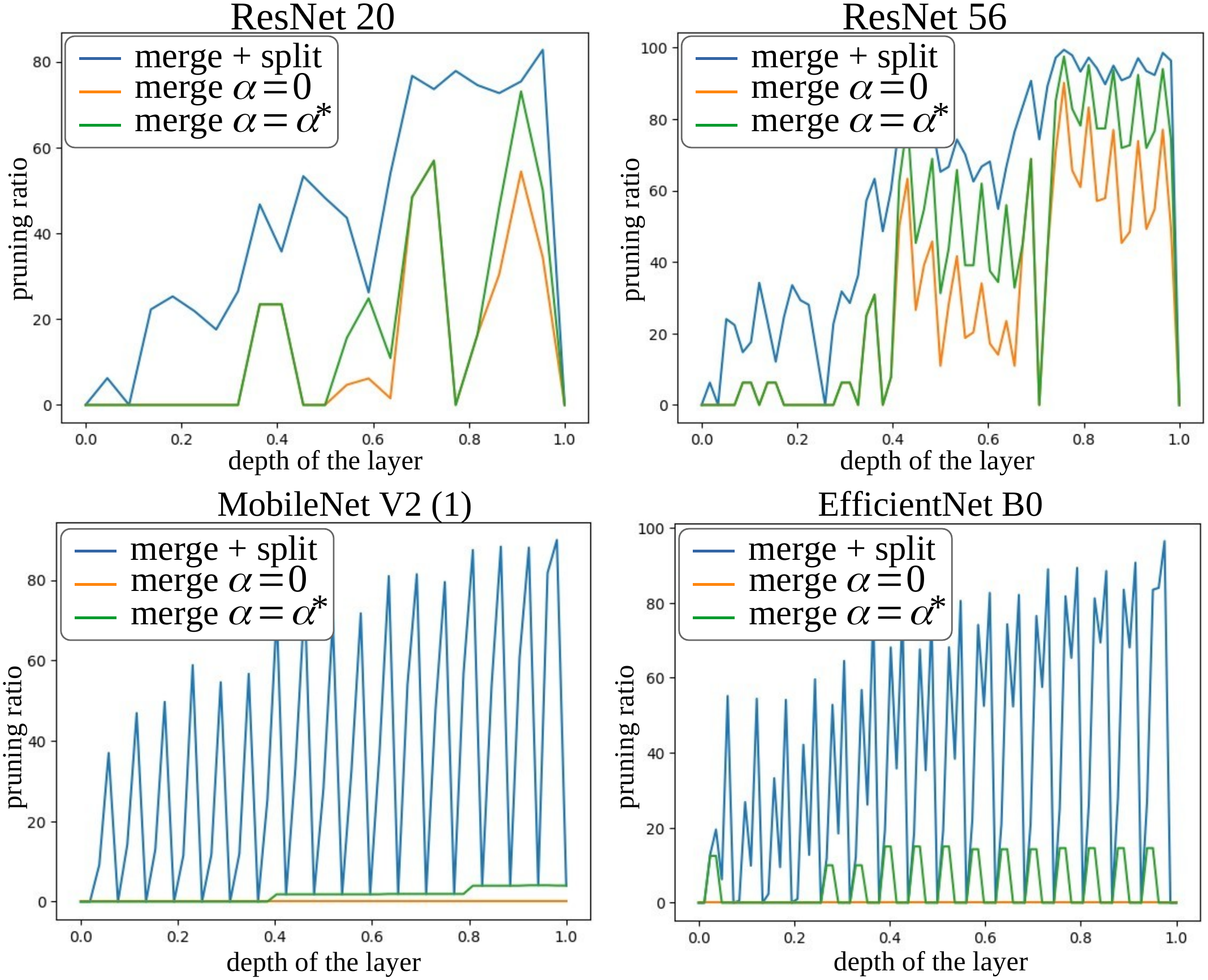}
    \caption{Layer-wise pruning ratios for ResNet 20 and 56 on Cifar10, MobileNet v2 with width multiplier $1$ and EfficientNet B0 on ImageNet. For each network, applying the merging and splitting steps systematically outperforms merge alone $\alpha = 0$ as well as $\alpha = \alpha^*$). Furthermore, it allows to prune certain layers such as the last ones in residual blocks.}
    \label{fig:qualitative_analysis}
\end{figure}
In Fig \ref{fig:qualitative_analysis}, we study the layer-wise pruning ratios for several networks trained on ImageNet. First, we observe that, the deeper the layer, the lower the compression. This is likely due to the fact that these layers usually have less weights, thus are likely to exhibit less redundancies than the shallower ones.
Second, with the merging step alone, we observe (Fig \ref{fig:qualitative_analysis} with the green and orange curves regularly going down to $0$) that some layers cannot be pruned, typically the last layer of a residual block in case of e.g. ResNet 20 and ResNet 56 (top row). This is a natural consequence of the definition of residuals blocks where the last layer's output is added to the input of the block, which thus has to be merged simultaneously. 
This adds a significant constraint on the condition of equation \ref{eq:merge} for neuron merging, hence very low pruning ratios.
However, this is not the case with splitting (blue curve) as its definition doesn't impact the consecutive layer of a specific pruned layer.
Third, depending on the architecture, the pruning from the merging step behaves differently, depending on the layer depth: linear for MobileNets and EfficientNets and ascending per block for ResNets. This observation allow us to design efficient guidelines for setting the layer-wise merging hyperparameters $\alpha^l$ (see Appendix \ref{sec:appendix_alpha} for more in-depth discussion on this subject): overall, the best performing strategy for setting the $(\alpha^l)$ is a block-wise strategy, for all architectures.

\end{appendices}

% if have a single appendix:
%\appendix[Proof of the Zonklar Equations]
% or
%\appendix  % for no appendix heading
% do not use \section anymore after \appendix, only \section
% is possibly needed

% use appendices with more than one appendix
% then use \section to start each appendix
% you must declare a \section before using any
% \subsection or using \label (\appendices by itself
% starts a section numbered zero.)
%

\

% % use section for acknowledgment
% \ifCLASSOPTIONcompsoc
%   % The Computer Society usually uses the plural form
%   \section{Acknowledgments}
% \else
%   % regular IEEE prefers the singular form
%   \section{Acknowledgment}
% \fi

% The authors would like to thank...

% Can use something like this to put references on a page
% by themselves when using endfloat and the captionsoff option.
\ifCLASSOPTIONcaptionsoff
  \newpage
\fi

\end{document}